\setlist{nolistsep, leftmargin=*}
\renewcommand\footnotetextcopyrightpermission[1]{} 
\begin{document}

\title{Complexity Results and Algorithms for Bipolar Argumentation}

\author{Amin Karamlou} 
\orcid{0000-0002-7467-090X}
\affiliation{
 \institution{Imperial College London}
  \streetaddress{South Kensington Campus}
  \city{London} 
  \country{United Kingdom}
  \postcode{SW7 2AZ}
}
\email{mak514@imperial.ac.uk} 
\author{Kristijonas \v Cyras}
\orcid{0000-0002-4353-8121}
\affiliation{
 \institution{Imperial College London}
  \streetaddress{South Kensington Campus}
  \city{London} 
  \country{United Kingdom}
  \postcode{SW7 2AZ}
}
\email{k.cyras@imperial.ac.uk}
\author{Francesca Toni}
\orcid{0000-0001-8194-1459}
\affiliation{
  \institution{Imperial College London}
  \streetaddress{South Kensington Campus}
  \city{London} 
  \country{United Kingdom}
  \postcode{SW7 2AZ}
}
\email{f.toni@imperial.ac.uk}
\renewcommand{\shortauthors}{A.~Karamlou et al.}

\begin{abstract}  

Bipolar Argumentation Frameworks (BAFs) admit several interpretations of the support relation and diverging definitions of semantics. 
Recently, several classes of BAFs have been captured as instances of bipolar Assumption-Based Argumentation, 
a class of Assumption-Based Argumentation (ABA). 
In this paper, we establish the complexity of bipolar ABA, 
and consequently of several classes of BAFs. 
In addition to the standard five complexity problems, we analyse the rarely-addressed extension enumeration problem too. 
We also advance backtracking-driven algorithms for enumerating extensions of bipolar ABA frameworks, 
and consequently of BAFs under several interpretations. 
We prove soundness and completeness of our algorithms, describe their implementation and provide a scalability evaluation. 
We thus contribute to the study of the as yet uninvestigated complexity problems of (variously interpreted) BAFs 
as well as of bipolar ABA, 
and provide the lacking implementations thereof. 

\end{abstract}

\keywords{Complexity; Structured Argumentation; Bipolar Argumentation}  

\maketitle


\section{Introduction}
\label{sec:introduction}

Human-understandable agent interaction is an important topic in multi-agent systems. 
Argumentation has been widely-used to model agent interaction, 
e.g.\ \cite{Parsons:Sierra:Jennings:1998,Ontanon:Plaza:2007,Amgoud:Serrurier:2008,Carrera:Iglesias:2015}, 
especially in the form of debates, 
e.g.\ \cite{Prakken:Sartor:1998,McBurney:Parsons:2009,Fan:Toni:2012-ECAI,DBLP:conf/kr/RagoTAB16}.
Bipolar argumentation 
(see e.g.\ \cite{Cayrol:Lagasquie-Schiex:2013,Cohen:Gottifredi:Garcia:Simari:2014,Cyras:Schulz:Toni:2017}) 
in particular has been shown to be applicable in capturing, formalising and executing debates, 
e.g.\ \cite{DBLP:conf/prima/RagoT17,DBLP:conf/aaai/MeniniCTV18,DBLP:conf/comma/BaroniBRT18}. 
Thus, issues pertaining to practical deployment of bipolar argumentation are of great importance. 

Bipolar Argumentation Frameworks (BAFs) 
(see e.g.\ \cite{Cayrol:Lagasquie-Schiex:2013,Cohen:Gottifredi:Garcia:Simari:2014,Gabbay:2016})
constitute one prominent class of formalisms for bipolar argumentation. 
In particular, they admit various interpretations of support and diverging definitions of semantics,
which arguably impinge their practical deployment. 
Recently, bipolar Assumption-Based Argumentation (\emph{bipolar ABA}) \cite{Cyras:Schulz:Toni:2017} has been shown to subsume BAFs under various interpretations of support 
(to which we henceforth refer to as \emph{various BAFs}), 
thus allowing for the consolidation of theoretical foundations of bipolar argumentation. 
However, the complexity of bipolar ABA and various BAFs is largely unknown 
(except for some complexity problems in one form of BAFs, namely deductive BAFs \cite{Cayrol:Lagasquie-Schiex:2013}, as given in \cite{Fazzinga.et.al:2018}). 
What is more, implementations of bipolar argumentation are generally lacking too 
(except for deductive BAFs, as given in \cite{DBLP:journals/argcom/EglyGW10}). 
This is despite the fact that computational problems in bipolar argumentation have tremendous potential for practical use, 
for instance, in knowing the effectiveness of answering questions such as ``Does there exist a winner of the debate?'', 
or in yielding all the `winning' arguments. 

In this paper, we address the above issues and provide complexity results as well as implementations for bipolar ABA, and therefore indirectly for various BAFs. 
Specifically, we analyse the (non-empty) existence, verification, (credulous and sceptical) acceptance and enumeration complexity problems in bipolar ABA under the semantics capturing various BAFs. 
We establish that bipolar ABA is equally as complex as abstract argumentation (AA) \cite{Dung:1995}. 
We then give algorithms for extension enumeration in bipolar ABA, 
which effectively capture solutions to other complexity problems too. 
We describe an implementation of bipolar ABA as well as various BAFs and complement it with a scalability evaluation, 
showing that our system is fit for practical deployment.

The paper is organised as follows. 
Section \ref{sec:background} provides background on argumentation and complexity theory, as well as existing complexity results for AA. 
We give the complexity results for bipolar ABA in Section \ref{sec:complexityResults}. 
In Section \ref{sec:algorithms} we advance new algorithms for implementing bipolar argumentation. 
We describe the software system implementing these algorithms in Section \ref{sec:implementation}, 
alongside evaluating the system's scalability practically. 
We review related work in Section \ref{sec:related} 
and discuss conclusions and future work in Section \ref{sec:conclusions}.


\section{Background}
\label{sec:background}

We here give background on argumentation and complexity.

\subsection{Argumentation}
\label{subsec:argumentation}

We start with background on argumentation.

\subsubsection{Assumption-Based Argumentation (ABA)}
\label{subsubsec:ABA}

Background on ABA and its restriction Bipolar ABA follows \cite{Bondarenko:Dung:Kowalski:Toni:1997,Toni:2014,Cyras:Schulz:Toni:2017,Cyras:Fan:Schulz:Toni:2018}.

\label{definition:ABA framework}
An \emph{ABA framework} is a tuple $\abaf$, where:
\begin{itemize}
\item $(\LL, \R)$ is a deductive system with $\LL$
a language (i.e.~a set of sentences)  
and $\R$ a set of rules of the form $\varphi_0 \leftarrow \varphi_1, \ldots, \varphi_m$ 
with $m \geqslant 0$ and $\varphi_i \in \LL$ for $i \in \{ 0, \ldots, m \}$; 
$\varphi_0$ is the \emph{head} and
$\varphi_1, \ldots, \varphi_m$ the \emph{body};
if $m = 0$, then $\varphi_0 \leftarrow \varphi_1, \ldots, \varphi_m$ 
has an empty body, and is written as 
$\varphi_0 \leftarrow \top$, where $\top \not\in \LL$;
\item $\A \subseteq \LL$ is a non-empty set of \emph{assumptions};
\item $\contrary : \A \to \LL$ is a total map: 
for $\asma \in \A$, the $\LL$-sentence $\contr{\asma}$ is referred to as the \emph{contrary} of $\asma$.
\end{itemize}

For the remainder of this section, 
we assume as given a fixed but otherwise arbitrary ABA framework $\F = \abaf$. 

\begin{itemize}
\item 
\label{definition:deduction}
A \emph{deduction for $\varphi \in \LL$ supported by $A \subseteq \A$ and $R \subseteq \R$}, 
denoted $A \vdash^R \varphi$, 
is a finite tree with: 
the root labelled by $\varphi$; 
leaves labelled by $\top$ or assumptions, with $A$ being the set of all such assumptions; 
the children of non-leaves $\psi$ labelled by the elements of the body of some 
$\psi$-headed rule in $\R$, 
with $R$ being the set of all such rules. 

\item 
\label{definition:assumption-level attack}
 $\asmA \subseteq \A$ \emph{attacks} $\asmB \subseteq \A$,
denoted $\abaattack{\asmA}{\asmB}$,
if there is a deduction 
$\deduction{\asmA'}{\contr{\asmb}}$
such that $\asmb \in \asmB$, 
$\asmA' \subseteq \asmA$ and $R \subseteq \R$. If it is not the case that $\asmA$ attacks $\asmB$, we may write $\nabaattack{\asmA}{\asmB}$. 

\end{itemize}
Let $\asmA \subseteq \A$:
\begin{enumerate*}
\label{definition:closure}
\item The \emph{closure} of $\asmA$ is 
$\cl(\asmA) = \{ \asma \in \A~:~\exists A' \vdash^R \asma,$ $\ A' \subseteq \asmA,\ R \subseteq \R \}$.
\item $\asmA$ is \emph{closed} iff $\asmA = \cl(\asmA)$. 
\item \F\ is \emph{flat} iff every $\asmA \subseteq \A$ is closed. 
\item $\asmA$ is \emph{conflict-free} iff $\nabaattack{\asmA}{\asmA}$.
\item $\asmA$ \emph{defends} $\asma \in \A$ iff for all closed $\asmB \subseteq \A$ with $ \abaattack{\asmB}{\{ \asma \}}$ 
it holds that $\abaattack{\asmA}{\asmB}$. We also say $\asmA$ defends $\asmB \subseteq \A$ if $\asmA$ defends every $\asmb \in \asmB$.
\end{enumerate*}

We use the following ABA semantics. 
A set $\asmE \subseteq \A$, also called an \emph{extension}, is: 
\label{definition:ABA semantics} 
\begin{enumerate*}
\item \emph{admissible} iff it is closed, conflict-free and defends itself. 
\item \emph{preferred} iff it is $\subseteq$-maximally admissible.
\item \emph{stable} iff it is closed, conflict-free
and $\abaattack{\asmE}{\{ \asma \}} ~~ \forall \asma \in \A \setminus \asmE$.
\item \emph{set-stable} iff it is closed, conflict-free and 
$\abaattack{\asmE}{\cl(\{ \asma \})} ~~ \forall \asma \in \A \setminus \asmE$.
\end{enumerate*}

In $\F$, 
a stable extension is set-stable, 
a set-stable extension is preferred, 
and if $\F$ is flat, then a set-stable extension is also stable. 

The restricted class Bipolar ABA is defined thus. 
\label{definition:bipolar ABA}
An ABA framework $\abaf$ is \emph{bipolar} iff every rule in \R\ is of the form $\varphi \ot \asma$,  
where $\asma \in \A$ and 
either $\varphi \in \A$ or $\varphi = \contr{\asmb}$ for some $\asmb \in \A$. 

Bipolar (just as flat) ABA frameworks admit admissible and preferred 
but not, in general, stable or set-stable extensions.

\subsubsection{Abstract Argumentation (AA)}
\label{subsubsec:AA}

We give background on AA following \cite{Dung:1995}. 
An \emph{AA framework} (AF) is a pair $\DAF$ with a (finite) set $\Args$ of arguments and a binary attack relation $\defeats$ on $\Args$. 
Notions of conflict-freeness and defence, as well as semantics of admissible, preferred and stable extensions are defined verbatim as for ABA, 
but with (sets of) arguments replacing (sets of) assumptions and the closure condition dropped. 
(As in flat ABA, set-stable and stable semantics coincide.)

\subsection{Elements of Complexity}
\label{subsec:complexity}

We assume knowledge of fundamental time and space complexity classes, as well as the concepts of hardness and completeness \cite{Papadimitriou:1994}. 
Thus, we here recap the complexity problems studied in argumentation, as well as established results for AFs.

\subsubsection{Enumeration}
\label{subsubsec:enumeration}

We first give (the less standard) enumeration problems and related complexity classes following \cite{Kroll:Pichler:Woltran:2017}.
    An \emph{enumeration problem} is a pair $(L, Sol)$ such that $L \subseteq \Sigma^*$ (for an alphabet $\Sigma$ containing at least two symbols) and $Sol: \Sigma^* \rightarrow 2^{\Sigma^*}$ is a function such that for all $x \in \Sigma^*$, we have that the set of \emph{solutions} $Sol(x)$ is finite, and $Sol(x) = \emptyset$ iff $x \notin L$.
    An \emph{enumeration algorithm} $\mathcal{A}$ for an enumeration problem $\mathcal{P} = (L, Sol)$ 
    outputs, on input $x$, exactly the elements from $Sol(x)$ without duplicates.
For enumeration algorithms, we use the RAM model of computation \cite{Kroll:Pichler:Woltran:2017}.

The complexity classes $\mathsf{OutputP}$ and \nOP\ are defined thus.
Let $\mathcal{P} = (L, Sol)$ be an enumeration problem. 
$\mathcal{P} \in \mathsf{OutputP}$ if there exists an enumeration algorithm $\mathcal{A}$ for $\mathcal{P}$ and some $m \in \mathbb{N}$, such that on every input $x$, algorithm $\mathcal{A}$ terminates in time $\mathcal{O}((|x| + |Sol(x)|)^m)$.  
Problems \emph{not} in $\mathsf{OutputP}$ constitute the class $\mathsf{nOP}$.

The following decision problem -- \complexityProblem[][][$\mathcal{P}$]{MANYSOL}: 
Given $x \in L$ and a positive integer $m$ in unary notation, is $|Sol(x)| \geq m$? -- is strongly related to the enumeration problem $\mathcal{P} = (L, Sol)$: 
If \complexityProblem[][][$\mathcal{P}$]{MANYSOL} $\notin \P$, then $\mathcal{P} \notin \mathsf{OutputP}$. 
We will use \textbf{MANYSOL} in our analysis of the enumeration problem in bipolar ABA.

\subsubsection{Problems of Interest}
\label{subsubsec:problems}

We now state the problems we are interested in. 
In the following, $\F$ stands for a bipolar ABA framework 
and $\sigma \in \{$\semantics$\}$ denotes a semantics, 
where \adm, \prf and \setstb\ abbreviate 
admissible, preferred and set-stable, respectively. 

\label{definition:standard-problems}
\begin{enumerate}[label=\bf{\arabic*.}]
\item \textsc{Existence (\complexityProblem[$\sigma$][$\F$]{EX}):} 
Does $\F$ admit a $\sigma$ extension?

\item \textsc{Non-Empty Existence (\complexityProblem[$\sigma$][$\F$]{NE})}: 
Does $\F$ admit a non-empty $\sigma$ extension?

\item \textsc{Verification (\complexityProblem[$\sigma$][$\F$][$A$]{VER}):} 
Given $A \subseteq \A$, is $A$ a $\sigma$ extension of $\F$?

\item \textsc{Credulous Acceptance (\complexityProblem[$\sigma$][$\F$][$a$]{CA}):} 
Given $a\in \LL$, is there a $\sigma$ extension $A$ of $\F$ such that 
$\deduction{\asmA'}{a}$ for some $A' \subseteq A$ and $R \subseteq \R$?

\item \textsc{sceptical Acceptance(\complexityProblem[$\sigma$][$\F$][$a$]{SA}):} 
Given $a \in \LL$, 
is it the case that for every $\sigma$ extension $A$ of $\F$ it holds that 
$\deduction{\asmA'}{a}$ for some $A' \subseteq A$ and $R \subseteq \R$?

\item \textsc{Extension Enumeration(\complexityProblem[$\sigma$][$\F$]{EE}):}
Return all $\sigma$ extensions of $\F$.
\end{enumerate}

The above complexity problems admit natural counterparts in BAFs (as well as AFs). 
In fact, the only difference is in the credulous and sceptical acceptance problems, 
for which instead of asking for deductions as in bipolar ABA, 
one asks for containment in extensions in B(AFs), 
see e.g.\ \cite{Dunne:Wooldridge:2009,Dvorak:Dunne:2017,Fazzinga.et.al:2018}. 
As various BAFs are captured in bipolar ABA via a polynomial mapping \cite{Cyras:Schulz:Toni:2017}, 
our complexity results for bipolar ABA in this paper will cover various BAFs too. 

Existing complexity results for AFs are summarised in Table \ref{complexity-summary-AFs} 
(\stb\ stands for stable); 
see \cite{Dvorak:Dunne:2017,Dunne:Wooldridge:2009} for surveys of these results. 

\begin{table}[h]
\caption{Existing complexity results for AFs. 
(Here and henceforth, Y stands for `Yes' and N stands for `No'.}
\label{complexity-summary-AFs}
\makebox[\linewidth]{\begin{tabular}{| c | c | c | c | c | c | c |}
\hline
\textbf{\sem} & \textbf{Ex} & \textbf{NE} &\textbf{VER} & \textbf{CA} & \textbf{SA} & \textbf{EE} \\ \hline
\adm & Trivial (Y)& \NP-c & $\P$ & \NP-c & Trivial (N) & $\mathsf{nOP}$ \\ \hline
\prf & Trivial (Y)& \NP-c & \coNP-c & \NP-c & $\mathsf{\Pi_2^P}$-c  & $\mathsf{nOP}$ \\ \hline
\stb & \NP-c & \NP-c & $\P$ & \NP-c & \coNP-c & $\mathsf{nOP}$ \\ \hline
\end{tabular}}
\end{table}


\section{Complexity Results}
\label{sec:complexityResults}

In this section, we prove new complexity results for the complexity problems in bipolar ABA. 
Table \ref{complexity-summary} summarises our results.

\begin{table}[h]
\caption{Complexity results for bipolar ABA frameworks.}
\label{complexity-summary}
\makebox[\linewidth]{\begin{tabular}{ | c | c | c | c | c | c | c |}
\hline
\textbf{\sem} & \textbf{Ex} & \textbf{NE} &\textbf{VER} & \textbf{CA} & \textbf{SA} & \textbf{EE} \\ \hline
\adm & Trivial (Y)& \NP-c & $\P$ & \NP-c & Trivial (N) & $\mathsf{nOP}$ \\ \hline
\prf & Trivial (Y)& \NP-c & \coNP-c & \NP-c & $\mathsf{\Pi_2^P}$-c  & $\mathsf{nOP}$ \\ \hline
\setstb & \NP-c & \NP-c & $\P$ & \NP-c & \coNP-c & $\mathsf{nOP}$ \\ \hline
\end{tabular}}
\end{table}

These results show that the problems for bipolar ABA frameworks belong to precisely the same complexity classes as their corresponding problems for AFs. 
As a consequence, the same results apply to the various BAFs investigated in \cite{Cyras:Schulz:Toni:2017}.

We first present prerequisite results needed for all of the problems, 
then we study verification, before moving on to existence, acceptance, and enumeration problems.
Note that because there exists a polynomial time mapping between AFs and bipolar ABA frameworks \cite{Cyras:Schulz:Toni:2017}, all the computational problems for bipolar ABA are at least as hard as their AF counterparts.

Throughout, unless stated otherwise, we assume as given a fixed but otherwise arbitrary bipolar ABA framework $\F = \abaf$.

\subsection{Prerequisite Results} 
\label{subsec:prerequisite}

The derivability problem for ABA frameworks is as follows.

\begin{description}
    \item \textsc{Derivability(\complexityProblem[][$\mathcal{F}$][$A, \alpha$]{DER})}: 
Given a set $A \subseteq \mathcal{A}$ and $\alpha \in \mathcal{L}$, does there exist a deduction of the form $\deduction{A}{\alpha}$?
\end{description}

\begin{proposition} \label{DER-NL-complete}
\complexityProblem[][$\mathcal{F}$][$A,\alpha$]{DER} is \NL-complete (thus in \P).
\end{proposition}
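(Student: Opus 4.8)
The plan is to prove membership in \NL\ and \NL-hardness separately, then invoke the standard containment $\NL \subseteq \P$ to conclude the parenthetical remark.

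For \textbf{membership in \NL}, I would give a nondeterministic logspace procedure that verifies the existence of a deduction $\deduction{A}{\alpha}$. The key observation is that in a bipolar ABA framework every rule has the form $\varphi \ot \asma$ with a single assumption in the body, so a deduction tree degenerates into a \emph{path}: starting from the root $\alpha$, each non-leaf has exactly one child, and the leaves are either $\top$ or elements of $A$. Thus deriving $\alpha$ amounts to a reachability question --- is there a chain $\alpha \ot \psi_1,\ \psi_1 \ot \psi_2,\ \ldots$ of rules in $\R$ terminating at some $\asma \in A$ (or at $\top$, i.e.\ a rule with empty body, though in bipolar ABA bodies are nonempty single assumptions). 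The nondeterministic machine guesses the path one node at a time, storing only the current sentence and verifying at each step that a suitable rule exists in $\R$; this uses only logarithmic space since a single sentence label fits in $\mathcal{O}(\log|\LL|)$ bits. I would need to bound the path length to avoid infinite looping: since a shortest deduction need not repeat a sentence, the guessed path can be capped at $|\LL|$ steps using a logspace counter, so the machine halts. This is essentially a reduction to directed $s$-$t$-reachability, which is the canonical \NL\ problem.

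For \textbf{\NL-hardness}, I would reduce directly from \textsc{$st$-Reachability} (\textsc{$st$-Conn}), which is \NL-complete. Given a directed graph $G$ with vertices $V$ and designated $s,t$, I would construct a bipolar ABA framework where each vertex becomes an assumption and each edge $(u,v)$ becomes a rule $u \ot v$ (or $v \ot u$, depending on the chosen direction of derivation), so that a deduction for $s$ from the singleton $\{t\}$ exists iff $t$ is reachable from $s$ in $G$. I would set $A = \{t\}$ and $\alpha = s$, and check that the mapping is computable in logspace (it is a local, edge-by-edge translation). The contrary map and the set of assumptions can be padded arbitrarily to meet the bipolar ABA syntactic requirements without affecting derivability.

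The \textbf{main obstacle} I anticipate is handling the syntactic constraints of bipolar ABA faithfully in the hardness reduction: bipolar rules require the body to be a \emph{single assumption} and the head to be either an assumption or a contrary, so I must ensure my edge-rules $u \ot v$ respect this (taking all vertices as assumptions makes heads and bodies assumptions, which is permitted). A secondary subtlety on the membership side is correctly arguing that restricting to single-body rules collapses the deduction \emph{tree} to a path --- I would make this explicit rather than assume it, since the general ABA definition allows branching and it is precisely the bipolar restriction that rules this out. Everything else (the logspace counter, the $\NL \subseteq \P$ inclusion) is routine.
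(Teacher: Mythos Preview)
Your proposal is correct and follows essentially the same approach as the paper: both arguments exploit that single-body bipolar rules collapse deductions to paths, reducing \complexityProblem{DER} to directed reachability (for membership) and reducing \textsc{Reachability} back to \complexityProblem{DER} (for hardness). The only cosmetic differences are that the paper's \NL\ algorithm walks \emph{forward} from a guessed $\beta \in A$ toward $\alpha$ (with counter bounded by $|\A|$) whereas you walk backward from $\alpha$ toward $A$ (bounded by $|\LL|$), and the paper's hardness reduction fixes the orientation as $y \leftarrow x$ for edge $x \to y$ with $A=\{s\}$, $\alpha=t$ and $\contr{\alpha}=\alpha$, while you leave the direction and contrary map open; none of this changes the substance.
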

\begin{proof}
\item \emph{Membership.} The following algorithm operates in logspace and nondeterministically solves the \complexityProblem[][$\mathcal{F}$][$A,\alpha$]{DER} problem:
\begin{enumerate*}
\item Create a variable $\beta$.
\item Set $\beta$ equal to an arbitrary element of $A$. If $\beta = \alpha$, output `yes'. Otherwise continue.
\item Initiate a counter $k = 0$. 
\item Pick an arbitrary rule $R \in \mathcal{R}$ s.t. $\{\beta\}$ is the body of $R$. If no such rule exists, output `no'. Otherwise continue.
\item If the head of $R$ is equal to $\alpha$, output `yes'. Otherwise continue.
\item set $\beta$ equal to the head of $R$. Increment $k$ by 1. If $ k \geq |\mathcal{A}|$, output `no'. Otherwise return to step 4.
\end{enumerate*}
Note that this algorithm 
operates in log space since the space usage of counter $k \leq \log({\mathcal{|R|}})$.
\item \emph{Hardness.} We provide a (logspace) reduction from \textsc{Reachability}, the canonical \NL-complete problem \cite{Papadimitriou:1994}. 
\begin{description}
\item \textsc{Reachability} (\complexityProblem[][][$G,s,t$]{RCH}): Given a directed graph $G$ and vertices $s$ and $t$ of $G$, is there a path from $s$ to $t$ in $G$?
\end{description}
The mapping below transforms a directed graph $G$ into a language $\mathcal{L}$ and a set of bipolar ABA rules $\mathcal{R}$:
\begin{itemize}
\item $\mathcal{L} = \A = \{x: x$ is a node of $G\}$,
\item $\mathcal{R} = \{y \leftarrow x: x$ and $y$ are nodes of $G$ and there exists an edge from $x$ to $y$ in $G\}$, 
\item $\contr{\alpha} = \alpha$ for $\alpha \in \A$.
\end{itemize}

This is a logspace transformation since we only need two counters to track the node and edge being considered at any point. Moreover, there is a path from $s$ to $t$ in $G$ iff $s = t$ or there is a chain of rules $R \subseteq \mathcal{R}$ of the form $t \leftarrow \gamma_n \leftarrow \ldots \leftarrow \gamma_2 \leftarrow \gamma_1 \leftarrow s$. 
This is precisely the condition in which \complexityProblem[][$\mathcal{F}$][$\{s\},t$]{DER} would output `yes'. 
Thus \complexityProblem[][][$G,s,t$]{RCH} is logspace reducible to \complexityProblem[][$\mathcal{F}$][$A,\alpha$]{DER}. This means that \complexityProblem[][$\mathcal{F}$][$A,\alpha$]{DER} is \NL-hard.
\end{proof}

We now analyse the fundamental properties of conflict-freeness and closure pertaining to all semantics considered in this paper. 

\begin{description}
\item \textsc{Conflict-Freeness} (\complexityProblem[][$\mathcal{F}$][$A$]{CF}): 
Given $A \subseteq \mathcal{A}$, is $A$ conflict-free in $\F$?
\item \textsc{Closure} (\complexityProblem[][$\mathcal{F}$][$A$]{CL}): 
Given $A \subseteq \mathcal{A}$, is $A$ closed in $\F$?
\end{description}

\begin{proposition}\label{prop:CF-CL-in-P}
\complexityProblem[][$\mathcal{F}$][$A$]{CF} and \complexityProblem[][$\mathcal{F}$][$A$]{CL} are in $\P$. 
\end{proposition}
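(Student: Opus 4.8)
The plan is to exploit the special (``chain'') shape of deductions in a bipolar framework and thereby reduce both problems to polynomially many invocations of \complexityProblem[][$\mathcal{F}$][$A,\alpha$]{DER}, which is in \P\ by Proposition~\ref{DER-NL-complete}. First I would record the key structural fact: since $\F$ is bipolar, every rule has the form $\varphi \leftarrow \alpha$ with a single assumption $\alpha$ in its body. Hence in any deduction tree each non-leaf has exactly one child, so the tree is a chain whose unique leaf is an assumption. Consequently the support of every deduction is a singleton, which gives the two working characterisations I will use: $\abaattack{A}{B}$ holds iff there are $\beta \in A$ and $\alpha \in B$ with $\deduction{\{\beta\}}{\contr{\alpha}}$, and $\cl(A) = \{\alpha \in \mathcal{A} : \exists \beta \in A \text{ with } \deduction{\{\beta\}}{\alpha}\}$.

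For \complexityProblem[][$\mathcal{F}$][$A$]{CF}, by definition $A$ is conflict-free iff $\nabaattack{A}{A}$, which by the above holds iff for no pair $\beta, \alpha \in A$ there is a deduction $\deduction{\{\beta\}}{\contr{\alpha}}$. I would therefore call \complexityProblem[][$\mathcal{F}$][$A,\alpha$]{DER} on input $(\{\beta\}, \contr{\alpha})$ for each of the at most $|A|^2 \leqslant |\mathcal{A}|^2$ pairs $(\beta, \alpha) \in A \times A$, and answer `yes' iff every call returns `no'. Since each call runs in polynomial time and there are polynomially many of them, this places \complexityProblem[][$\mathcal{F}$][$A$]{CF} in \P.

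For \complexityProblem[][$\mathcal{F}$][$A$]{CL}, I would first note that $A \subseteq \cl(A)$ always holds, because each $\alpha \in A$ is derived by the trivial one-node deduction $\deduction{\{\alpha\}}{\alpha}$; hence $A$ is closed iff $\cl(A) \subseteq A$, i.e.\ iff no assumption $\alpha \in \mathcal{A} \setminus A$ admits a deduction $\deduction{\{\beta\}}{\alpha}$ with $\beta \in A$. I would decide this by calling \complexityProblem[][$\mathcal{F}$][$A,\alpha$]{DER} on $(\{\beta\}, \alpha)$ for each of the at most $|A| \cdot |\mathcal{A}|$ pairs with $\beta \in A$ and $\alpha \in \mathcal{A} \setminus A$, answering `yes' iff every call returns `no'. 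This is again polynomially many polynomial-time calls, so \complexityProblem[][$\mathcal{F}$][$A$]{CL} is in \P\ as well.

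The only step that genuinely needs care is the structural observation that bipolarity forces singleton supports, as this is what licenses replacing the subset condition $A' \subseteq A$ in the definitions of attack and closure by a search over single assumptions $\beta \in A$; once that is in place, the remainder is routine bookkeeping of how many \complexityProblem[][$\mathcal{F}$][$A,\alpha$]{DER} queries suffice, and I expect this bookkeeping to present no real difficulty.
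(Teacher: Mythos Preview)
Your proposal is correct and takes essentially the same approach as the paper: reduce both problems to polynomially many \complexityProblem[][$\mathcal{F}$][$A,\alpha$]{DER} queries. The only difference is bookkeeping: the paper calls \complexityProblem[][$\mathcal{F}$][$A,\overline{\alpha}$]{DER} (resp.\ \complexityProblem[][$\mathcal{F}$][$A,\alpha$]{DER}) once per target $\alpha \in A$ (resp.\ $\alpha \notin A$) with the full set $A$ as input, whereas you additionally iterate over singleton sources $\beta \in A$; this extra loop is harmless but unnecessary, since the \NL\ algorithm of Proposition~\ref{DER-NL-complete} already nondeterministically selects an initial element of its input set.
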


\begin{proof} 
We present \P-time algorithms for both problems.

\emph{\complexityProblem[][$\mathcal{F}$][$A$]{CF}}: For each $\alpha \in A$, use an $\NL$ oracle for \complexityProblem[][$\mathcal{F}$][$A, \overline{\alpha}]{DER}$ to check if $\abaattack{A}{\{\alpha\}}$. If it does, output `no'. Else, output `yes'.
    
\emph{\complexityProblem[][$\mathcal{F}$][$A$]{CL}}: For each $\alpha \notin A$, use an $\NL$ oracle for \complexityProblem[][$\mathcal{F}$][$A,\alpha$]{DER} to check if $\deduction{A}{\alpha}$ for some $R \subseteq \mathcal{R}$. If it does, output `no'. Otherwise, output `yes'.
\end{proof}

We will use the following result, 
which says that in a bipolar ABA framework, no sentence is deducible without any assumptions.

\begin{lemma}\label{lemma-no-empty-deductions-in-bipolar ABA}
There is no deduction in $\F$ of the form $\deduction{\emptyset}{\varphi}$ for any $\varphi \in \mathcal{L}$ and $R \subseteq \mathcal{R}$.
\end{lemma}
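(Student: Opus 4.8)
The plan is to exploit the single defining restriction of the bipolar fragment: every rule has the form $\varphi \ot \asma$ with a single assumption $\asma \in \A$ as its body, so in particular $\F$ contains no empty-bodied rule $\varphi \leftarrow \top$. The key intermediate claim I would isolate first is that the symbol $\top$ cannot label \emph{any} node of a deduction tree in $\F$. To see this I would argue by cases on the position of a node: the root is labelled by some $\varphi \in \LL$, and since $\top \not\in \LL$ by the framework definition, the root is not $\top$; every non-root node is, by the definition of deduction, a body element of some rule applied higher in the tree, hence (by bipolarity) an assumption in $\A$, which is again not $\top$.

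Granting that claim, the rest is immediate. Every leaf of a deduction tree in $\F$ is labelled by $\top$ or an assumption, but the first possibility is now excluded, so every leaf is labelled by an assumption. A deduction is by definition a finite tree rooted at $\varphi$, hence non-empty, so it has at least one leaf, and that leaf is an assumption. Since the support set $A$ is precisely the set of assumptions labelling leaves, this forces $A \neq \emptyset$; contrapositively, no deduction $\deduction{\emptyset}{\varphi}$ can exist, which is exactly the statement. I would note in passing that the same conclusion can be reached by structural induction on the tree: in the base case the single node is $\varphi \in \LL \neq \top$, hence an assumption, giving $A = \{\varphi\}$; in the inductive case bipolarity forces the root to have exactly one child $\asma \in \A$ whose subtree is itself a deduction, so the induction hypothesis yields a non-empty support coinciding with $A$.

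I do not anticipate a serious obstacle, as the argument is short once the definitions are unwound. The only point genuinely requiring care is the bookkeeping of which symbols may appear at which nodes of a deduction tree, that is, the justification that $\top$ never occurs. This rests entirely on the absence of empty-bodied rules in the bipolar fragment, so I would state that observation explicitly and lean on it; everything else follows from the finiteness of deduction trees and the definition of the support set $A$.
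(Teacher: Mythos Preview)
Your argument is correct and follows essentially the same route as the paper: both observe that bipolar ABA admits no facts $\varphi \leftarrow \top$, hence $\top$ cannot label any node of a deduction tree, so every leaf must be an assumption and the support set is non-empty. Your write-up is more careful about the bookkeeping (root vs.\ non-root nodes, the induction alternative), but the underlying idea is identical.
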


\begin{proof}
Bipolar ABA frameworks do not contain any facts (i.e.\ rules of the form $\alpha \leftarrow \top$). 
Hence, it is impossible to have $\top$ as the child of any node in a bipolar ABA deduction. As a result, a deduction of the form $\deduction{\emptyset}{\overline{\varphi}}$ does not exist.
\end{proof}

\subsection{Verification}

We now analyse the complexity of the verification problem under the admissible, preferred and set-stable semantics. In order to prove the results for admissible semantics, we first introduce the notion of minimal attacks in ABA.

\begin{definition}
$\assumptionsubseteq{A}$ \textbf{minimally attacks} $\assumptionsubseteq{B}$, 
denoted by $\minabaattack{A}{B}$, 
iff $\abaattack{A}{B}$ and there is no $A' \subset A$ s.t. $\abaattack{A'}{B}$. 
\end{definition}
\begin{lemma}
\label{min-attack-lemma}
All minimal attacks are of the form $\minabaattack{\{\alpha\}}{B}$ where $\alpha \in \mathcal{A}$ and $B \subseteq \mathcal{A}$.
\end{lemma}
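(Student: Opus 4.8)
The plan is to exploit the very restrictive shape of rules in a bipolar ABA framework. Recall that an attack $\abaattack{A}{B}$ means there is a deduction $\deduction{A'}{\contr{\asmb}}$ for some $\asmb \in B$ with $A' \subseteq A$. A minimal attack is one whose supporting assumption set $A$ cannot be shrunk while still attacking $B$. So I would start by fixing a minimal attack $\minabaattack{A}{B}$ and examining the deduction $A' \vdash^R \contr{\asmb}$ that witnesses it, where $A'$ is exactly the set of assumptions labelling the leaves of the deduction tree.

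The key observation is structural: in a bipolar ABA framework every rule has the form $\varphi \ot \asma$ with a single assumption $\asma$ in its body. Hence every non-leaf node in any deduction tree has exactly one child. Combined with Lemma \ref{lemma-no-empty-deductions-in-bipolar ABA}, which rules out $\top$ appearing in the tree, this forces each deduction to be a single branch (a path) whose unique leaf is a single assumption. Therefore the support $A'$ of any deduction in $\F$ is a singleton $\{\asma\}$. I would make this precise by an easy induction on the height of the deduction tree, or simply by noting that a finite tree in which every internal node has exactly one child is a path and so has exactly one leaf.

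Putting these together: the witnessing deduction $\deduction{A'}{\contr{\asmb}}$ has $A' = \{\asma\}$ for some single $\asma \in \mathcal{A}$, and since $A' \subseteq A$ we have $\{\asma\} \subseteq A$ with $\abaattack{\{\asma\}}{B}$. By minimality of $A$ we cannot have a strict subset of $A$ still attacking $B$; since $\{\asma\}$ does attack $B$ and $\{\asma\} \subseteq A$, minimality forces $A = \{\asma\}$. Thus the attack is of the claimed singleton form $\minabaattack{\{\asma\}}{B}$.

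I do not anticipate a serious obstacle here; the result is essentially a corollary of the single-assumption-body restriction of bipolar ABA. The one point requiring care is the justification that every deduction in $\F$ is supported by a singleton assumption set — this is where both the syntactic form of bipolar rules and Lemma \ref{lemma-no-empty-deductions-in-bipolar ABA} are genuinely needed (the latter to exclude an empty support via a $\top$-leaf). Once that supporting fact is established, the minimality argument is immediate.
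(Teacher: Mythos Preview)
Your proposal is correct and follows essentially the same approach as the paper: both arguments use the single-assumption-body shape of bipolar ABA rules (together with Lemma~\ref{lemma-no-empty-deductions-in-bipolar ABA}) to conclude that any deduction witnessing an attack is a chain supported by a single assumption, and then invoke minimality to force $A = \{\alpha\}$. The paper presents this as a proof by contradiction with a case split on $|A|$, whereas you argue directly, but the core idea is identical.
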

\begin{proof}
Assume there are $\assumptionsubseteq{A}$ and $\assumptionsubseteq{B}$ s.t. $\minabaattack{A}{B}$ and $|A| \neq 1$. 
Then we have two cases: 
\begin{enumerate*}
\item $|A| = \emptyset$: Lemma \ref{lemma-no-empty-deductions-in-bipolar ABA} implies that $\nabaattack{A}{B}$. This contradicts $\minabaattack {A}{B}$.
\item $|A| > 1$: In order for $\abaattack{A}{B}$ there must exist $\alpha \in A$ and $\beta \in B$ where either  $\alpha = \overline{\beta}$ or there exists a chain of rules $\overline{\beta} \leftarrow \gamma_n \leftarrow \ldots \leftarrow \gamma_2 \leftarrow \gamma_1 \leftarrow \alpha$. 
In both cases we have $\abaattack{\{\alpha\}}{B}$. However, $\{\alpha\} \subset A$ so we have a contradiction to the definition of minimal attacks.  
\end{enumerate*}
In any event, $A = \{\alpha\}$ where $\alpha \in \mathcal{A}$ as required.
\end{proof}

\begin{proposition}\label{bipolar ABA-VER_adm}
\complexityProblem[adm][$\mathcal{F}$][$A$]{VER} is in $\P$.
\end{proposition}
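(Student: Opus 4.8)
The plan is to decide admissibility of $A$ by running the polynomial-time closure and conflict-freeness tests of Proposition~\ref{prop:CF-CL-in-P} and then checking, in polynomial time, that $A$ defends every $\alpha \in A$. Recall that $A$ is admissible iff it is closed, conflict-free and defends itself, so once closedness and conflict-freeness are settled the only remaining task is the defence condition. Read literally, defence of $\alpha$ quantifies over every closed $B \subseteq \A$ with $\abaattack{B}{\{\alpha\}}$, and there can be exponentially many such $B$; the crux of the proof is to collapse this quantifier to a polynomially-bounded family of representative attackers.

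To achieve this I would invoke Lemma~\ref{min-attack-lemma}: since every minimal attack in a bipolar ABA framework is launched by a single assumption, a closed set $B$ attacks $\{\alpha\}$ iff it contains some $\gamma$ with $\abaattack{\{\gamma\}}{\{\alpha\}}$. Building on this, the key equivalence I aim to prove is that $A$ defends $\alpha$ iff $\abaattack{A}{\cl(\{\gamma\})}$ for every $\gamma \in \A$ with $\abaattack{\{\gamma\}}{\{\alpha\}}$. Necessity is direct: each $\cl(\{\gamma\})$ is a closed set attacking $\{\alpha\}$, so defence forces $A$ to attack it. For sufficiency, take an arbitrary closed $B$ attacking $\{\alpha\}$; by the lemma it contains some attacker $\gamma$, whence $\cl(\{\gamma\}) \subseteq B$ because $B$ is closed, and any assumption of $\cl(\{\gamma\})$ that $A$ attacks therefore lies in $B$, giving $\abaattack{A}{B}$. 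The step requiring the most care is precisely this interaction between the closure operator and the closedness of $B$: one must check that $\cl(\{\gamma\})$ is simultaneously a legitimate witness against which $A$ must defend and a subset of every closed attacker containing $\gamma$.

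Given the equivalence, membership in $\P$ follows by a straightforward counting of oracle calls. For each $\alpha \in A$ and each $\gamma \in \A$ I would use the \NL\ oracle for derivability (Proposition~\ref{DER-NL-complete}) to test whether $\abaattack{\{\gamma\}}{\{\alpha\}}$, i.e.\ whether $\deduction{\{\gamma\}}{\overline{\alpha}}$ exists; whenever it does, I compute $\cl(\{\gamma\})$ in polynomial time and, for each $\delta \in \cl(\{\gamma\})$, query the oracle again to decide whether $\abaattack{A}{\{\delta\}}$, reporting failure if no such $\delta$ is attacked. Together with the closure and conflict-freeness checks, this performs only $\mathcal{O}(|A|\cdot|\A|^2)$ polynomial-time oracle queries, so the entire verification runs in polynomial time and \complexityProblem[adm][$\mathcal{F}$][$A$]{VER} lies in $\P$.
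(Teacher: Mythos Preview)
Your argument is correct and follows the same skeleton as the paper: check conflict-freeness and closure via Proposition~\ref{prop:CF-CL-in-P}, then collapse the universal quantifier in the defence condition to singleton attackers using Lemma~\ref{min-attack-lemma}. The one substantive difference lies in the defence test itself. The paper, for each $\beta \notin A$ with $\abaattack{\{\beta\}}{A}$, checks whether $\abaattack{A}{\{\beta\}}$ and rejects if not; you instead check whether $\abaattack{A}{\cl(\{\gamma\})}$. Your version is the sound one for bipolar (hence non-flat) ABA: since defence is quantified only over \emph{closed} attackers, the minimal closed witness containing $\gamma$ is $\cl(\{\gamma\})$, and it is that set---not the bare singleton---that $A$ must counter-attack. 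Concretely, take $\A = \{a,b,c\}$ with rules $b \leftarrow c$, $\overline{a} \leftarrow c$, $\overline{b} \leftarrow a$: then $A = \{a\}$ is admissible (it attacks $\cl(\{c\}) = \{b,c\}$ via $b$), yet the paper's literal step~3 would reject it because $\nabaattack{A}{\{c\}}$. So the care you take over the interaction between $\cl$ and closedness of $B$ is exactly what is needed; the paper's argument establishes only one direction of the defence equivalence and is, as written, slightly too strong a test.
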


\begin{proof}
\begin{enumerate*}
\item Use a $\P$ oracle for \complexityProblem[][$\mathcal{F}$][$A$]{CF} to check if $A$ is conflict-free. If it is not, output `no'. Otherwise continue.
\item Use a $\P$ oracle for \complexityProblem[][$\mathcal{F}$][$A$]{CL} to check if $A$ is closed. If it is not output `no'. Otherwise continue.
\item For each assumption $\beta \notin A$, call an $\NL$ oracle for \complexityProblem[][$\mathcal{F}$][$A, \overline\alpha$]{DER} $|A|$ times (once for each $\alpha \in A$) to check if $\abaattack{\{\beta\}}{A}$. If it does and $\nabaattack{A}{\{\beta\}}$ output `no'. Otherwise continue.
\item  Output `yes'.
\end{enumerate*}

Note that step 3 is sufficient to check that $A$ defends itself. This follows from the fact that if $\abaattack{B}{A}$ then $\minabaattack{\{\beta\}}{A}$ for some $\beta \in B$ (Lemma \ref{min-attack-lemma}). From the definition of attacks, it follows that if $\abaattack{A}{\{\beta\}}$ then $\abaattack{A}{B}$ as well. Moreover, since step 1 of the algorithm checks that $A$ is conflict-free, we know that $\beta \notin A$. So it suffices to prove that A defends itself against singleton sets of assumptions which are not contained within it.
\end{proof}
\begin{proposition} \label{bipolar ABA-VER_pref}
\complexityProblem[prf][$\mathcal{F}$][$A$]{VER} is \coNP-Complete.
\end{proposition}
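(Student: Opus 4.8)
The plan is to prove that $\complexityProblem[prf][$\mathcal{F}$][$A$]{VER}$ is \coNP-complete by establishing membership in \coNP\ and then \coNP-hardness separately. For membership, I would characterise when a closed, conflict-free, self-defending (i.e.\ admissible) set $A$ fails to be preferred: this happens precisely when some \emph{strictly larger} admissible set $A' \supsetneq A$ exists. So the complement problem (``$A$ is \emph{not} preferred'') is the disjunction ``$A$ is not admissible'' OR ``$A$ is admissible but not $\subseteq$-maximal''. The first disjunct is decidable in \P\ by Proposition \ref{bipolar ABA-VER_adm}. For the second, a polynomial-size certificate for non-maximality is a witness set $A' \supsetneq A$ together with the (polynomially checkable, by Proposition \ref{bipolar ABA-VER_adm}) fact that $A'$ is admissible. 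Hence the complement is in \NP, which places $\complexityProblem[prf][$\mathcal{F}$][$A$]{VER}$ in \coNP.

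For \coNP-hardness, I would exploit the polynomial-time mapping between AFs and bipolar ABA frameworks from \cite{Cyras:Schulz:Toni:2017}, already invoked at the start of Section \ref{sec:complexityResults}. Since $\complexityProblem[prf][$G$][$S$]{VER}$ is \coNP-complete for AFs (Table \ref{complexity-summary-AFs}), and the mapping is faithful to preferred semantics, a given AF instance $(G, S)$ reduces in logspace/polynomial time to a bipolar ABA instance $(\mathcal{F}, A)$ such that $S$ is a preferred extension of $G$ iff $A$ is a preferred extension of $\mathcal{F}$. This immediately transfers \coNP-hardness. I would state this reliance on the semantics-preserving correspondence explicitly, as it is the cleanest route and mirrors the paper's stated strategy that ``all the computational problems for bipolar ABA are at least as hard as their AF counterparts.''

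The main obstacle I anticipate is making the membership argument fully rigorous, specifically justifying that checking a single witness $A'$ suffices and that it is verifiable in polynomial time. The subtle point is that admissibility verification for the witness $A'$ must itself be polynomial, which is exactly what Proposition \ref{bipolar ABA-VER_adm} guarantees; without that prior result the argument would collapse. I would therefore carefully phrase the non-maximality certificate as a set $A'$ with $A \subsetneq A' \subseteq \mathcal{A}$ satisfying $\complexityProblem[adm][$\mathcal{F}$][$A'$]{VER}$, noting that guessing $A'$ costs at most $|\mathcal{A}|$ nondeterministic bits and verifying it costs polynomial time. A secondary care point is ensuring the input $A$ is confirmed admissible before reasoning about maximality, so the two disjuncts of the complement are handled uniformly.
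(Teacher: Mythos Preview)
Your proposal is correct and essentially mirrors the paper's proof: the paper gives exactly the same \NP\ algorithm for the complement (check admissibility of $A$ in \P\ via Proposition \ref{bipolar ABA-VER_adm}; if admissible, guess $A' \supset A$ and verify its admissibility in \P), and for hardness it relies on the blanket observation at the start of Section \ref{sec:complexityResults} that the polynomial-time mapping from AFs into bipolar ABA transfers all lower bounds. The only difference is presentational: the paper states membership as an explicit nondeterministic procedure for $\complexityProblem[prf][\mathcal{F}][A]{coVER}$ rather than phrasing it as a certificate argument, and leaves hardness entirely implicit in the earlier blanket remark.
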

\begin{proof} \leavevmode
Membership comes from the following non-deterministic, \P-time algorithm, adapted from \cite{Dimopoulos:Nebel:Toni:2002} , which solves the \complexityProblem[prf][$\mathcal{F}$][$A$]{coVER} problem:
\begin{enumerate*}
\item Use a $\P$ oracle for \complexityProblem[adm][$\mathcal{F}$][$A$]{VER} to check if $A$ is admissible. If it is not, output `yes'. Otherwise continue.
\item Guess an assumption set $A' \supset A$.
\item Use a $\P$ oracle for \complexityProblem[adm][$\mathcal{F}$][$A'$]{VER} to check if $A'$ is admissible. If it is, output `yes'. Otherwise output `no'.
\end{enumerate*}
\end{proof}

\begin{proposition}\label{ABA-VER_set-stable}
\complexityProblem[set-stb][$\mathcal{F}$][$A$]{VER} is in \P.
\end{proposition}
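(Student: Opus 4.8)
The plan is to exhibit a deterministic polynomial-time algorithm that verifies, in turn, the three defining conditions of set-stability for the candidate set $A$: that $A$ is closed, that $A$ is conflict-free, and that $\abaattack{A}{\cl(\{\alpha\})}$ holds for every $\alpha \in \mathcal{A} \setminus A$. The first two conditions are dispatched immediately by invoking the $\P$ oracles for \complexityProblem[][$\mathcal{F}$][$A$]{CL} and \complexityProblem[][$\mathcal{F}$][$A$]{CF} supplied by Proposition \ref{prop:CF-CL-in-P}; if either returns `no', the algorithm outputs `no'. The substance of the argument therefore lies entirely in checking the third, set-attack, condition within a polynomial budget.

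The crux is that, in a bipolar ABA framework, each closure $\cl(\{\alpha\})$ is a polynomially-sized object that can be materialised efficiently. By definition $\gamma \in \cl(\{\alpha\})$ iff there is a deduction $\deduction{A''}{\gamma}$ with $A'' \subseteq \{\alpha\}$, and by Lemma \ref{lemma-no-empty-deductions-in-bipolar ABA} such a deduction cannot have empty support, so $A'' = \{\alpha\}$. Hence $\cl(\{\alpha\}) = \{\gamma \in \mathcal{A} : \deduction{\{\alpha\}}{\gamma}\}$, and for each candidate $\gamma \in \mathcal{A}$ I would decide membership with a single call to the \NL\ oracle for \complexityProblem[][$\mathcal{F}$][$\{\alpha\}, \gamma$]{DER} (Proposition \ref{DER-NL-complete}). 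This computes the whole set $\cl(\{\alpha\})$ using at most $|\mathcal{A}|$ oracle calls per $\alpha$.

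Having $\cl(\{\alpha\})$ in hand, I would reduce the set-attack test to singleton-attack tests: by the definition of $\abaattack{}{}$, $A$ attacks the set $\cl(\{\alpha\})$ iff it attacks $\{\beta\}$ for some $\beta \in \cl(\{\alpha\})$. So I would check, for each $\beta \in \cl(\{\alpha\})$, whether $\abaattack{A}{\{\beta\}}$ via one call to the \NL\ oracle for \complexityProblem[][$\mathcal{F}$][$A, \overline{\beta}$]{DER}, exactly as in the algorithms for \complexityProblem[][$\mathcal{F}$][$A$]{CF} and \complexityProblem[adm][$\mathcal{F}$][$A$]{VER}. If for some $\alpha \in \mathcal{A} \setminus A$ no member of $\cl(\{\alpha\})$ is attacked by $A$, output `no'; otherwise, after all such $\alpha$ are processed, output `yes'. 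In total the procedure performs $\mathcal{O}(|\mathcal{A}|^2)$ \NL-oracle calls together with the two $\P$-oracle calls, all of which lie in $\P$ by Propositions \ref{DER-NL-complete} and \ref{prop:CF-CL-in-P}, so the composite algorithm runs in polynomial time.

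The main obstacle I anticipate is justifying that $\cl(\{\alpha\})$ can be genuinely \emph{enumerated} within the polynomial bound rather than merely queried for membership, since the set-attack condition ranges over all of its elements. The enabling observation is structural: because every bipolar ABA rule carries a single assumption in its body, a deduction supported by $\{\alpha\}$ is a simple chain, so $\cl(\{\alpha\})$ is just the set of assumptions reachable from $\alpha$ along rule chains -- a set of size at most $|\mathcal{A}|$ recovered exactly by the $|\mathcal{A}|$ \textsc{DER} queries above. Once this is in place, the reduction of the set-attack test to singleton-attack tests is immediate from the definition of attack, and the remaining accounting of oracle calls is routine.
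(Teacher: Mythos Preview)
Your proposal is correct and follows essentially the same approach as the paper's proof: check conflict-freeness and closure via the $\P$ oracles from Proposition \ref{prop:CF-CL-in-P}, then for each $\alpha \notin A$ materialise $\cl(\{\alpha\})$ with $|\mathcal{A}|$ calls to the \complexityProblem{DER} oracle and test the set-attack condition with a further $|\cl(\{\alpha\})|$ calls. The only difference is that you supply additional justification (invoking Lemma \ref{lemma-no-empty-deductions-in-bipolar ABA} to rule out empty-support deductions and explicitly reducing the set-attack to singleton attacks), which the paper leaves implicit.
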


\begin{proof} We present the following \P-time algorithm: 
\begin{enumerate*}
\item Use a $\P$ oracle for \complexityProblem[][$\mathcal{F}$][$A$]{CF} to check if $A$ is conflict-free. 
If it is not, output `no'. Else continue.
\item Use a $\P$ oracle for \complexityProblem[][$\mathcal{F}$][$A$]{CL} to check if $A$ is closed. 
If it is not, output `no'. Else continue.
\item For each $\alpha \notin A$, calculate $Cl(\{\alpha\})$ 
by calling an $\NL$ oracle for \complexityProblem[][$\mathcal{F}$][]{DER} $|\A|$ times, 
and then check if $\abaattack{A}{Cl(\{\alpha\})}$ using an additional $|Cl(\{\alpha\})|$ oracle calls.
If it does not, output `no'. Otherwise, output `yes'.
\end{enumerate*}
\end{proof}

\subsection{Existence and Acceptance}
\label{subsec:existence}

Before proving the remainder of our results we make the following observations. 

\begin{proposition} \label{fact-adm=pref}
\complexityProblem[adm][$\mathcal{F}$]{EX}, \complexityProblem[adm][$\mathcal{F}$]{NE} and \complexityProblem[adm][$\mathcal{F}$]{CA} are respectively equivalent to \complexityProblem[prf][$\mathcal{F}$]{EX}, \complexityProblem[prf][$\mathcal{F}$]{NE} and \complexityProblem[prf][$\mathcal{F}$]{CA}.
\end{proposition}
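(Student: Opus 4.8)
The plan is to prove the three equivalences by exploiting the well-known relationship between admissible and preferred semantics, namely that preferred extensions are exactly the $\subseteq$-maximal admissible extensions. The key structural fact I would rely on is that the \emph{set} of admissible extensions and the \emph{set} of preferred extensions, while not identical, share the same ``reachability'' and ``coverage'' properties that the existence, non-empty existence, and credulous acceptance problems actually probe. I would handle each of the three problems in turn.

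\textbf{For existence (EX).} This direction is essentially immediate and arguably trivial: by definition a preferred extension is admissible, so any preferred extension witnesses the existence of an admissible one; conversely, since every bipolar ABA framework admits at least the empty set as an admissible extension (as the table already records \complexityProblem[adm][$\mathcal{F}$]{EX} as Trivially Yes), and since the set of admissible extensions is finite and non-empty, a $\subseteq$-maximal one exists and is by definition preferred. So both problems are trivially answered `yes', hence equivalent.

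\textbf{For non-empty existence (NE) and credulous acceptance (CA).} Here the single unifying lemma I would establish first is: \emph{$\mathcal{F}$ has a non-empty admissible extension iff it has a non-empty preferred extension}, and more generally \emph{an assumption $a$ (or the relevant $\LL$-sentence via a deduction) is contained in / derivable from some admissible extension iff it is contained in / derivable from some preferred extension}. The $(\Leftarrow)$ directions are immediate since preferred extensions are admissible. The $(\Rightarrow)$ directions are the crux, and they follow from the \emph{monotonicity/extendability} property: every admissible extension $A$ is contained in some preferred extension $A^*$. This holds because the admissible extensions, ordered by $\subseteq$, form a finite poset in which every admissible set extends to a $\subseteq$-maximal (hence preferred) one — I would justify this by noting that admissibility in bipolar (flat) ABA is preserved under taking $\subseteq$-maximal supersets within the admissible family, so a maximal admissible superset of any admissible $A$ exists and is preferred. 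Given $A \subseteq A^*$, if $A$ is non-empty so is $A^*$ (yielding NE-equivalence), and if $\deduction{A'}{a}$ for some $A' \subseteq A$ then also $A' \subseteq A^*$, so the same deduction witnesses credulous acceptance in $A^*$ (yielding CA-equivalence).

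\textbf{Main obstacle.} The one step requiring genuine care is the extendability claim — that every admissible extension is contained in some preferred extension. For \emph{flat} ABA this is standard and follows from a Zorn-style or finiteness argument together with the fact that the union of a chain of admissible sets remains admissible; since bipolar ABA frameworks are flat (every $\asmA \subseteq \A$ is closed, as the closure condition is automatically met when all rules have a single assumption body), I expect no difficulty, but I would state it explicitly rather than assume it, because the general (non-flat) ABA case can be subtle. Once extendability is in hand, the three equivalences drop out uniformly, and I would present them compactly, treating EX as a one-line observation and devoting the bulk of the argument to the shared $(\Rightarrow)$ direction for NE and CA.
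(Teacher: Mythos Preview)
Your approach is essentially the paper's: both hinge on the same two facts---every preferred extension is admissible, and every admissible extension is contained in some preferred one---from which the three equivalences follow immediately. The paper simply cites \cite{Dimopoulos:Nebel:Toni:2002} for the extendability fact and leaves it at that.

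One correction, though: your justification of extendability contains a factual error. Bipolar ABA frameworks are \emph{not} flat in general. A rule $\beta \leftarrow \alpha$ with $\alpha, \beta \in \A$ (explicitly allowed by the definition of bipolar ABA) makes $\{\alpha\}$ non-closed whenever $\beta \neq \alpha$; indeed the paper repeatedly works with non-trivial closures $\cl(\{\alpha\})$ and defines the set-stable semantics precisely to handle this non-flatness. So your claimed route ``bipolar ABA is flat, hence extendability holds'' does not go through. Fortunately the conclusion survives: since $\A$ is finite, the family of admissible extensions containing a given admissible $A$ is finite and non-empty, hence has a $\subseteq$-maximal element, which is automatically preferred (any strictly larger admissible set would still contain $A$, contradicting maximality). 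That finiteness argument---or the cited result from \cite{Dimopoulos:Nebel:Toni:2002}, which applies to general ABA---is what you should invoke instead.
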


\begin{proof}
Follows from the fact that every preferred extension is admissible and every admissible extension is a subset of some preferred assumption set \cite[Prop1]{Dimopoulos:Nebel:Toni:2002}.
\end{proof}

\begin{proposition} \label{prop:set-stable-ne-equal-ex}
\complexityProblem[set-stb][$\mathcal{F}$]{EX} is equivalent to \complexityProblem[set-stb][$\mathcal{F}$]{NE}.
\end{proposition}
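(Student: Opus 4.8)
The plan is to observe that in a bipolar ABA framework no set-stable extension can ever be empty, whence the two decision problems must return the same answer on every input $\F$ and are therefore equivalent. First I would dispose of the easy direction from \complexityProblem[set-stb][$\mathcal{F}$]{NE} to \complexityProblem[set-stb][$\mathcal{F}$]{EX}: a non-empty set-stable extension is in particular a set-stable extension, so a positive answer to the former immediately yields a positive answer to the latter.

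For the converse direction I would show that $\emptyset$ is never a set-stable extension of $\F$. By Lemma \ref{lemma-no-empty-deductions-in-bipolar ABA} there is no deduction of the form $\deduction{\emptyset}{\varphi}$ for any $\varphi \in \LL$, and hence $\nabaattack{\emptyset}{B}$ for every $B \subseteq \A$; that is, the empty set attacks nothing. Since $\A$ is non-empty by definition, I would fix some $\alpha \in \A$. Then $\alpha \in \A \setminus \emptyset$, so set-stability of $\emptyset$ would demand $\abaattack{\emptyset}{\cl(\{\alpha\})}$, contradicting the fact that the empty set attacks nothing. Consequently $\emptyset$ is not set-stable, and so every set-stable extension of $\F$ is non-empty.

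It follows that $\F$ admits a set-stable extension if and only if it admits a non-empty one, which gives the equivalence of \complexityProblem[set-stb][$\mathcal{F}$]{EX} and \complexityProblem[set-stb][$\mathcal{F}$]{NE}. There is no real obstacle here; the only point requiring care — the \emph{hard part}, such as it is — is recognising that set-stability of the empty set fails precisely because bipolar ABA admits no assumption-free deductions (Lemma \ref{lemma-no-empty-deductions-in-bipolar ABA}), in combination with the standing assumption that $\A \neq \emptyset$. Everything else is immediate.
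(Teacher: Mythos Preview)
Your proof is correct and follows essentially the same route as the paper: both show that $\emptyset$ cannot be set-stable because $\A \neq \emptyset$ forces $\abaattack{\emptyset}{\cl(\{\alpha\})}$ for some $\alpha$, which is impossible by Lemma~\ref{lemma-no-empty-deductions-in-bipolar ABA}. The paper is terser and leaves the trivial direction implicit, but the argument is the same.
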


\begin{proof}
    Assume $\emptyset$ is set-stable in $\mathcal{F}$. 
    As $\mathcal{A} \neq \emptyset$, there is $\alpha \in \mathcal{A}$ s.t.\ $\abaattack{\emptyset}{Cl(\{\alpha\})}$. But this contradicts Lemma \ref{lemma-no-empty-deductions-in-bipolar ABA}. Thus, $\emptyset$ is never set-stable in $\mathcal{F}$, and so existence of a set-stable extension is equivalent to the existence of a non-empty set-stable extension.
\end{proof}

\subsubsection{Existence} 
We now consider (non-empty) existence. 

\begin{proposition}
\label{prop:ex-constant-yes}
\complexityProblem[adm][$\mathcal{F}$]{Ex} and \complexityProblem[prf][$\mathcal{F}$]{Ex} are constant, with answer `yes'.
\end{proposition}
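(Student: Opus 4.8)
The plan is to reduce both claims to a single observation: the empty set $\emptyset$ is always an admissible extension of any bipolar ABA framework $\F$. Once that is established, existence of an admissible extension is immediate, and existence of a preferred one follows with essentially no extra work. So I would begin by unfolding the definition of admissibility for $\asmE = \emptyset$, which requires checking three things: that $\emptyset$ is closed, conflict-free, and defends itself.

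Two of these are immediate and do not depend on bipolarity. For conflict-freeness, $\emptyset$ is conflict-free iff $\nabaattack{\emptyset}{\emptyset}$; but any attack $\abaattack{\emptyset}{\emptyset}$ would by definition require a deduction $\deduction{A'}{\contr{\asmb}}$ with $\asmb \in \emptyset$, which is vacuously impossible. For self-defence, $\emptyset$ defends $\emptyset$ iff it defends every $\asmb \in \emptyset$, again vacuously true. So both conditions hold trivially.

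The only step that genuinely uses bipolarity is closedness, and this is exactly where I would lean on Lemma~\ref{lemma-no-empty-deductions-in-bipolar ABA}. By definition $\cl(\emptyset) = \{\asma \in \A : \exists A' \vdash^R \asma,\ A' \subseteq \emptyset,\ R \subseteq \R\}$, i.e.\ the set of assumptions derivable from no assumptions whatsoever. Lemma~\ref{lemma-no-empty-deductions-in-bipolar ABA} states precisely that no deduction of the form $\deduction{\emptyset}{\varphi}$ exists in a bipolar framework (because bipolar ABA contains no facts $\asma \leftarrow \top$), so $\cl(\emptyset) = \emptyset$ and $\emptyset$ is closed. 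Hence $\emptyset$ is admissible, and \complexityProblem[adm][$\mathcal{F}$]{Ex} always answers `yes'.

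For the preferred case I would invoke Proposition~\ref{fact-adm=pref}, which already establishes the equivalence of the admissible and preferred existence problems; alternatively, since the set of admissible extensions is finite and now known to be non-empty, it must contain a $\subseteq$-maximal element, which is by definition preferred. Either route yields that \complexityProblem[prf][$\mathcal{F}$]{Ex} also always answers `yes'. I do not expect any real obstacle: the single point requiring care is the closedness check, which is precisely where the absence of facts in bipolar ABA is essential — in a general ABA framework with facts, $\emptyset$ need not be closed and this argument would break down, so it is worth flagging explicitly that the proof relies on the bipolar restriction.
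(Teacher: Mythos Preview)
Your proposal is correct and follows essentially the same approach as the paper: show $\emptyset$ is admissible by checking conflict-freeness and self-defence (both vacuous) and closedness via Lemma~\ref{lemma-no-empty-deductions-in-bipolar ABA}, then obtain the preferred case from Proposition~\ref{fact-adm=pref}. Your write-up is simply more detailed than the paper's terse version, and your alternative finiteness argument for the preferred case is a harmless addition.
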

\begin{proof}
$\emptyset$ is conflict-free, defends itself, and, by Lemma \ref{lemma-no-empty-deductions-in-bipolar ABA}, is closed. 
Hence, $\emptyset$ is admissible, which establishes the claim for \complexityProblem[adm][$\mathcal{F}$]{Ex}. 
The claim for \complexityProblem[prf][$\mathcal{F}$]{Ex} then follows from Proposition \ref{fact-adm=pref}.
\end{proof}

Now we switch our attention to the non-emptiness problem. 

\begin{proposition}\label{non-empty-np-complete-for-all}
\complexityProblem[adm][$\mathcal{F}$]{NE}, \complexityProblem[prf][$\mathcal{F}$]{NE}, \complexityProblem[set-stb][$\mathcal{F}$]{NE} and \complexityProblem[set-stb][$\mathcal{F}$]{Ex} are \NP-Complete.
\end{proposition}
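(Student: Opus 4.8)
The plan is to prove \NP-membership and \NP-hardness separately for the four listed problems, routing the two semantics whose verification is not polynomial through the equivalences already established, so that only polynomially verifiable semantics ever need to be guessed and checked.

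For membership I would dispatch the admissible and set-stable non-emptiness problems directly by guess-and-check. For \complexityProblem[adm][$\mathcal{F}$]{NE} I nondeterministically guess a subset $A \subseteq \mathcal{A}$, confirm $A \neq \emptyset$, and then run the \P-time verifier for \complexityProblem[adm][$\mathcal{F}$][$A$]{VER} supplied by Proposition \ref{bipolar ABA-VER_adm}, accepting iff both checks succeed; this is an \NP\ procedure. The argument for \complexityProblem[set-stb][$\mathcal{F}$]{NE} is identical, guessing a non-empty $A$ and verifying it in \P-time via Proposition \ref{ABA-VER_set-stable}.

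The preferred case resists this treatment: verifying a preferred extension is \coNP-complete (Proposition \ref{bipolar ABA-VER_pref}), so a naive guess-and-verify would only place the problem in $\mathsf{\Sigma_2^P}$. I would instead invoke Proposition \ref{fact-adm=pref}, which makes \complexityProblem[prf][$\mathcal{F}$]{NE} the very same decision problem as \complexityProblem[adm][$\mathcal{F}$]{NE}, so its \NP-membership is inherited from the admissible case. Symmetrically, \NP-membership of \complexityProblem[set-stb][$\mathcal{F}$]{Ex} follows from Proposition \ref{prop:set-stable-ne-equal-ex}, which identifies it with \complexityProblem[set-stb][$\mathcal{F}$]{NE}.

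For hardness I would appeal to the polynomial-time mapping from AFs into bipolar ABA frameworks, under which AF admissible and preferred extensions correspond to their bipolar ABA namesakes and AF stable extensions correspond to set-stable ones. Since non-empty existence under admissible, preferred and stable, together with existence under stable, are all \NP-hard for AFs (Table \ref{complexity-summary-AFs}), hardness transfers to all four bipolar ABA problems, yielding \NP-completeness throughout. The step I expect to demand the most care is the preferred case: the whole membership argument hinges on Proposition \ref{fact-adm=pref}, so one must be confident that every non-empty admissible set extends to a non-empty preferred set, ensuring that the two non-emptiness questions genuinely coincide rather than merely implying one another in a single direction.
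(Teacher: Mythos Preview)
Your proposal is correct and mirrors the paper's approach exactly: membership for \complexityProblem[adm][$\mathcal{F}$]{NE} and \complexityProblem[set-stb][$\mathcal{F}$]{NE} via guess-and-check against the \P-time verifiers, with \complexityProblem[prf][$\mathcal{F}$]{NE} and \complexityProblem[set-stb][$\mathcal{F}$]{Ex} reduced to these through Propositions \ref{fact-adm=pref} and \ref{prop:set-stable-ne-equal-ex}, and hardness inherited from AFs via the polynomial-time mapping (which the paper states once at the start of Section \ref{sec:complexityResults} rather than in the proof itself). Your closing worry is already discharged by Proposition \ref{fact-adm=pref}, which asserts the equivalence of \complexityProblem[adm][$\mathcal{F}$]{NE} and \complexityProblem[prf][$\mathcal{F}$]{NE} outright.
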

\begin{proof} \leavevmode
The following non-deterministic, \P-time algorithm proves membership for admissible and set-stable semantics. 
The results for \complexityProblem[prf][$\mathcal{F}$]{NE} and \complexityProblem[set-stb][$\mathcal{F}$]{Ex} follow from Propositions \ref{fact-adm=pref} and \ref{prop:set-stable-ne-equal-ex}.

\begin{enumerate*}
\item Guess an assumption set $A \subseteq \mathcal{A}$.
\item Use a $\P$ oracle for \complexityProblem[adm][$\mathcal{F}$][$A$]{VER} (or \complexityProblem[set-stb][$\mathcal{F}$][$A$]{VER}) to check if $A$ is an admissible (or set-stable) extension. If not, output `no'. Otherwise Output `yes'.
\end{enumerate*}
\end{proof}

\subsubsection{Credulous and Sceptical Acceptance} 
We now turn to acceptance problems. 

\begin{proposition}
\complexityProblem[adm][$\mathcal{F}$][$\alpha$]{CA}, \complexityProblem[prf][$\mathcal{F}$][$\alpha$]{CA}, and \complexityProblem[set-stb][$\mathcal{F}$][$\alpha$]{CA} are \NP-complete, \complexityProblem[set-stb][$\mathcal{F}$][$\alpha$]{SA} is \coNP-complete, and \complexityProblem[prf][$\mathcal{F}$][$\alpha$]{SA} is $\mathsf{\Pi_2^P}$-complete.
\end{proposition}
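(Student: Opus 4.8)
The plan is to settle membership by guess-and-check procedures that call the verification oracles established above, and to obtain every hardness result from the polynomial mapping of AFs into bipolar ABA \cite{Cyras:Schulz:Toni:2017} together with the AF results of Table \ref{complexity-summary-AFs}.

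Underlying all membership arguments is a structural observation: since every rule of a bipolar ABA framework has a body consisting of a single assumption, any deduction $\deduction{A'}{a}$ is a chain with a single leaf, so $A' = \{\alpha\}$ for some $\alpha \in \A$. Hence checking whether $a$ is derivable from some $A' \subseteq A$ reduces to testing, for each $\alpha \in A$, whether $\deduction{\{\alpha\}}{a}$, i.e.\ to at most $|A|$ calls to the (\NL, hence \P) oracle for \complexityProblem[][$\mathcal{F}$][$\{\alpha\}, a$]{DER} of Proposition \ref{DER-NL-complete}. I will call this polynomial-time subroutine the \emph{derivability test} for $a$ against $A$. Membership in \NP\ for \complexityProblem[adm][$\mathcal{F}$][$\alpha$]{CA} and \complexityProblem[set-stb][$\mathcal{F}$][$\alpha$]{CA} then follows by guessing $A \subseteq \A$, verifying that $A$ is admissible (resp.\ set-stable) with the \P-time oracle of Proposition \ref{bipolar ABA-VER_adm} (resp.\ Proposition \ref{ABA-VER_set-stable}), and running the derivability test, accepting iff both succeed; the case \complexityProblem[prf][$\mathcal{F}$][$\alpha$]{CA} is immediate from its equivalence with \complexityProblem[adm][$\mathcal{F}$][$\alpha$]{CA} (Proposition \ref{fact-adm=pref}).

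For sceptical acceptance I reason about the complement. The complement of \complexityProblem[set-stb][$\mathcal{F}$][$\alpha$]{SA} asks whether there is a set-stable $A$ from which $a$ is \emph{not} derivable; guessing $A$, verifying set-stability in \P\ (Proposition \ref{ABA-VER_set-stable}) and running the derivability test (accepting iff it fails) places this complement in \NP, so \complexityProblem[set-stb][$\mathcal{F}$][$\alpha$]{SA} $\in \coNP$. The complement of \complexityProblem[prf][$\mathcal{F}$][$\alpha$]{SA} asks for a preferred $A$ from which $a$ is not derivable; here verifying preferredness requires the \coNP\ oracle of Proposition \ref{bipolar ABA-VER_pref}, so the complement lies in $\mathsf{\Sigma_2^P} = \NP^{\coNP}$ and \complexityProblem[prf][$\mathcal{F}$][$\alpha$]{SA} $\in \mathsf{\Pi_2^P}$. (The vacuous case in which no set-stable extension exists is handled correctly, since the complement then finds no witness.)

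Hardness in every case is inherited from AFs via the polynomial mapping \cite{Cyras:Schulz:Toni:2017}: the AF problems \complexityProblem[adm][][$\alpha$]{CA}, \complexityProblem[prf][][$\alpha$]{CA} and \complexityProblem[stb][][$\alpha$]{CA} are \NP-hard, \complexityProblem[stb][][$\alpha$]{SA} is \coNP-hard and \complexityProblem[prf][][$\alpha$]{SA} is $\mathsf{\Pi_2^P}$-hard (Table \ref{complexity-summary-AFs}), and since the image of the mapping is flat, its set-stable and stable semantics coincide, matching the AF stable case to bipolar set-stable. The step that needs care — and the main obstacle — is that acceptance is phrased via \emph{containment} in AFs but via \emph{derivability} in bipolar ABA, so I must verify that the mapping sends ``argument $a$ is in the extension'' to ``the sentence encoding $a$ is derivable from the extension''. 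This holds because the mapping represents each argument as an assumption and emits only rules whose heads are contraries; such an assumption $a$ therefore has no rule with head $a$, its only deduction is the trivial one-node tree supported by $\{a\}$, and so $a$ is derivable from $A$ exactly when $a \in A$. Containment and derivability thus coincide on the query sentences produced by the reduction, and the AF lower bounds transfer verbatim.
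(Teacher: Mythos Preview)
Your proposal is correct and follows essentially the same route as the paper: membership by a guess--and--check procedure that calls the appropriate \complexityProblem[$\sigma$][$\mathcal{F}$][$A$]{VER} oracle together with a \complexityProblem[][$\mathcal{F}$]{DER} check, and hardness inherited globally from AFs via the polynomial mapping of \cite{Cyras:Schulz:Toni:2017}. Your write-up is in fact more careful than the paper's: the paper simply invokes an \NL\ oracle for \complexityProblem[][$\mathcal{F}$]{DER} in step 3, whereas you spell out why deductions in bipolar ABA are chains with singleton support (so the derivability test is polynomially many \complexityProblem[][$\mathcal{F}$]{DER} calls), and you explicitly argue why AF containment and bipolar-ABA derivability coincide under the mapping (because the mapping introduces only rules headed by contraries, so the sole deduction of an argument-assumption is trivial). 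The paper treats the latter point only informally in Section~\ref{subsubsec:problems} and handles hardness once for all problems at the start of Section~\ref{sec:complexityResults}.
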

\begin{proof} \leavevmode
Membership uses the following algorithm, adapted from \cite{Dimopoulos:Nebel:Toni:2002}, solving \complexityProblem[$\sigma$][$\mathcal{F}$][$\alpha$]{CA} and \complexityProblem[$\sigma$][$\mathcal{F}$][$\alpha$]{coSA}, and our previous results for \complexityProblem[$\sigma$][$\mathcal{F}$][$A$]{VER}. The result for \complexityProblem[prf][$\mathcal{F}$][$\alpha$]{CA} follows from Proposition \ref{fact-adm=pref}.

\begin{enumerate*}
\item Guess $A \subseteq \mathcal{A}$. 
\item Use a \complexityProblem[$\sigma$][$\mathcal{F}$][$A$]{VER} oracle to check if $A$ is a $\sigma$ extension. If it is not, output `no'. Otherwise continue.
\item Use an $\NL$ oracle for \complexityProblem[][$\mathcal{F}$]{DER} to check that the formula under consideration is derivable (or not derivable for \complexityProblem[$\sigma$][$\mathcal{F}$][$\alpha$]{coSA}) from $A$ and $\mathcal{R}$. If it is not, output `no'. 
Otherwise, output `yes'.
\end{enumerate*}
\end{proof}

Sceptical acceptance under admissible semantics is trivial. 

\begin{proposition}
\complexityProblem[adm][$\mathcal{F}$][$\alpha$]{SA} is constant, with answer `no'.
\end{proposition}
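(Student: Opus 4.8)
The plan is to exploit the fact that the empty set is always an admissible extension of $\F$. First I would recall from (the proof of) Proposition~\ref{prop:ex-constant-yes} that $\emptyset$ is admissible in every bipolar ABA framework. Since sceptical acceptance of $\alpha$ requires that \emph{every} admissible extension derive $\alpha$, it suffices to exhibit a single admissible extension from which $\alpha$ is not derivable, and the empty set is the natural candidate for every $\alpha$ at once.

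Next I would show that $\alpha$ can never be derived from the empty set. Any deduction supported by some $A' \subseteq \emptyset$ must have $A' = \emptyset$, i.e.\ be of the form $\deduction{\emptyset}{\alpha}$, and Lemma~\ref{lemma-no-empty-deductions-in-bipolar ABA} rules out any such deduction for any $\alpha \in \LL$. Hence the admissible extension $\emptyset$ fails the derivability condition for $\alpha$, regardless of which $\alpha \in \LL$ is given as input.

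Combining these two observations, the universally quantified condition defining sceptical acceptance fails on every input $\alpha$: there is always at least one admissible extension, namely $\emptyset$, that does not derive $\alpha$. The answer is therefore constantly `no', independent of both $\alpha$ and $\F$, which establishes the claim.

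I expect no real obstacle here, since both ingredients (admissibility of $\emptyset$ via Proposition~\ref{prop:ex-constant-yes} and the absence of empty-supported deductions via Lemma~\ref{lemma-no-empty-deductions-in-bipolar ABA}) are already in hand. The only point requiring care is the quantifier structure of the problem: one must note that a \emph{single} failing admissible extension suffices to refute the condition ``every extension derives $\alpha$'', which is exactly what $\emptyset$ provides.
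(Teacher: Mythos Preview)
Your proposal is correct and follows essentially the same approach as the paper's own proof: both invoke the admissibility of $\emptyset$ (via Proposition~\ref{prop:ex-constant-yes}) and then use Lemma~\ref{lemma-no-empty-deductions-in-bipolar ABA} to conclude that nothing is derivable from $\emptyset$, so no sentence can be sceptically accepted.
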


\begin{proof}
$\emptyset$ is admissible (as in the proof of Proposition \ref{prop:ex-constant-yes}), 
so any sentence derivable from all admissible extensions is derivable from $\emptyset$. 
However, by Lemma \ref{lemma-no-empty-deductions-in-bipolar ABA}, no such sentence exists. 
\end{proof}

We are left to address the enumeration problem.

\subsection{Extension Enumeration}
\label{subsec:enumeration}

We here establish the complexity of \textbf{EE} in bipolar ABA using the \textbf{MANYSOL} problem (see Section \ref{subsec:complexity}). 

\begin{proposition}
\complexityProblem[adm][$\mathcal{F}$]{EE}, \complexityProblem[prf][$\mathcal{F}$]{EE} and \complexityProblem[set-stb][$\mathcal{F}$]{EE} are in $\mathsf{nOP}$, assuming $\P \neq \NP$.
\end{proposition}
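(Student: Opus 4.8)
The plan is to show that each of these enumeration problems fails to be in $\mathsf{OutputP}$ by invoking the connection to \textbf{MANYSOL} established in Section \ref{subsec:complexity}: namely, that if \complexityProblem[][][$\mathcal{P}$]{MANYSOL} $\notin \P$, then $\mathcal{P} \notin \mathsf{OutputP}$. So for each semantics $\sigma \in \{\adm, \prf, \setstb\}$, the goal reduces to proving that \complexityProblem[$\sigma$][$\mathcal{F}$][]{MANYSOL}---deciding whether $\mathcal{F}$ has at least $m$ many $\sigma$ extensions, with $m$ given in unary---is \NP-hard (hence not in $\P$ under the assumption $\P \neq \NP$), and thus that the corresponding \textbf{EE} problem lies in $\mathsf{nOP}$.

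The cleanest route is to reduce from one of the \NP-complete acceptance or non-emptiness problems already proved above. First I would reduce \complexityProblem[$\sigma$][$\mathcal{F}$]{NE} (which is \NP-complete for each of the three semantics by Proposition \ref{non-empty-np-complete-for-all}) to \complexityProblem[$\sigma$][$\mathcal{F}$][]{MANYSOL}. The idea is that $\emptyset$ is always an admissible (hence preferred-compatible) extension by Lemma \ref{lemma-no-empty-deductions-in-bipolar ABA} and Proposition \ref{prop:ex-constant-yes}, so for the admissible and preferred cases, $\mathcal{F}$ admits a non-empty $\sigma$ extension iff it has strictly more than one $\sigma$ extension (for admissible) or more than one preferred extension in a suitably padded framework. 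More concretely, given an instance $\mathcal{F}$ of \complexityProblem[$\sigma$][$\mathcal{F}$]{NE}, I would construct in polynomial time a framework $\mathcal{F}'$ together with a unary integer $m$ such that $\mathcal{F}'$ has at least $m$ many $\sigma$ extensions iff $\mathcal{F}$ has a non-empty $\sigma$ extension. The simplest such padding adds a collection of mutually independent ``free'' assumptions whose presence multiplies the number of extensions by a fixed factor, so that a single extra non-empty extension in $\mathcal{F}$ crosses the threshold $m$.

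The main obstacle I anticipate is getting the arithmetic of the padding to work uniformly across all three semantics while keeping $m$ polynomial (so that it can be written in unary). For admissible semantics the count of extensions is typically large already, so I would instead aim to reduce directly so that the \emph{yes}-instances of \complexityProblem[$\sigma$][$\mathcal{F}$]{NE} correspond to crossing a threshold that a \emph{no}-instance cannot reach; the subtle point is that a \emph{no}-instance of \complexityProblem[$\sigma$][$\mathcal{F}$]{NE} still has $\emptyset$ as its unique extension (admissible/preferred) or no extension at all (set-stable), and I must ensure the threshold $m$ separates these two cases. For set-stable semantics the situation is cleaner: by Proposition \ref{prop:set-stable-ne-equal-ex} existence coincides with non-empty existence, so a \emph{no}-instance has zero set-stable extensions and I need only set $m = 1$, giving an immediate reduction from \complexityProblem[set-stb][$\mathcal{F}$]{Ex}.

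Putting this together, I would prove \complexityProblem[$\sigma$][$\mathcal{F}$][]{MANYSOL} $\notin \P$ (under $\P \neq \NP$) for each $\sigma$ via the appropriate reduction, then invoke the stated implication to conclude \complexityProblem[$\sigma$][$\mathcal{F}$]{EE} $\notin \mathsf{OutputP}$, i.e.\ \complexityProblem[$\sigma$][$\mathcal{F}$]{EE} $\in \mathsf{nOP}$, which is exactly the claim. I would verify throughout that every constructed framework remains bipolar (every rule of the form $\varphi \ot \asma$), since otherwise the results would not transfer to bipolar ABA and thence to various BAFs.
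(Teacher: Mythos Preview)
Your overall strategy---show that the associated \textbf{MANYSOL} problem is \NP-hard and invoke the implication from Section~\ref{subsec:complexity}---matches the paper's. However, the paper takes a much shorter route to \NP-hardness: it simply imports the known result that \complexityProblem[][][\complexityProblem[$\sigma$][AF]{EE}]{MANYSOL} is \NP-hard for $\sigma \in \{\adm, \prf, \stb\}$ from \cite{Kroll:Pichler:Woltran:2017}, and then uses the polynomial-time mapping from AFs into (flat) bipolar ABA \cite{Cyras:Schulz:Toni:2017}, together with the coincidence of stable and set-stable semantics on flat frameworks. No padding, no case analysis.

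Your direct reductions from \complexityProblem[$\sigma$][$\mathcal{F}$]{NE} do work for two of the three semantics: for admissible, taking $m=2$ suffices (since $\emptyset$ is always admissible, a non-empty admissible extension exists iff there are at least two); for set-stable, $m=1$ suffices via Proposition~\ref{prop:set-stable-ne-equal-ex}. For preferred semantics, however, there is a genuine gap. A no-instance of \complexityProblem[prf][$\mathcal{F}$]{NE} has exactly one preferred extension (namely $\emptyset$), while a yes-instance has at least one---and possibly also exactly one (e.g.\ a single unattacked assumption yields a unique non-empty preferred extension). No threshold separates these on $\mathcal{F}$ itself, and your proposed ``free assumption'' padding does not help: an unattacked assumption is defended by every set and hence belongs to \emph{every} preferred extension, leaving the count unchanged. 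Disjoint 2-cycles would double the count, but doubling still cannot separate ``exactly one'' from ``at least one''. A working direct reduction for preferred would need a different source problem or a substantially more intricate gadget; the paper avoids the issue entirely by borrowing the AF result.
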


\begin{proof}
\complexityProblem[][][\complexityProblem[$\sigma$][AF]{EE}]{MANYSOL} is $\NP$-hard for $\sigma \in \{\adm, \prf, \stb\}$
    \cite{Kroll:Pichler:Woltran:2017}.  
Because AFs can be mapped into flat bipolar ABA in \P-time \cite{Cyras:Schulz:Toni:2017} 
and since stable and set-stable semantics coincide for flat ABA, 
\complexityProblem[][][\complexityProblem[$\sigma$][$\mathcal{F}$]{EE}]{MANYSOL} is $\NP$-hard for $\sigma \in \{ \semantics \}$.
\end{proof}

This completes the complexity analysis of bipolar ABA as summarised in Table \ref{complexity-summary}.


\section{Algorithms}
\label{sec:algorithms}

We have shown that many of the standard problems for bipolar ABA are non-tractable. 
As such, practical algorithms for solving them must make use of advanced techniques and heuristics. We now propose such algorithms for the \complexityProblem[$\sigma$][$\mathcal{F}$]{EE} problem. 
Note that, effectively, \complexityProblem[$\sigma$][$\mathcal{F}$]{EE} answers the other standard problems too. Having all the $\sigma$ extensions of $\F$ one can establish (non-empty) existence immediately, verification by checking membership in the set of enumerated extensions, and (credulous and sceptical) acceptance by using the efficient algorithm for derivation (cf.\ Proposition \ref{DER-NL-complete}).

The algorithms in this section make use of a backtracking strategy. They recursively traverse a binary tree from left to right, where the root node is the empty set and the tree forks to a left (or right) node by including (or excluding) an assumption. If the current node represents a valid extension, it is added to the solution set. Backtracking occurs whenever the procedure is going down a path which will never lead to a correct solution, at this point, it moves back up the tree and takes a different path instead.

\subsection{Enumeration of Preferred Extensions}

We first give a basic algorithm for enumerating preferred extensions that conveys the main ideas. 

In what follows a labelling is a total mapping $Lab: \mathcal{A} \rightarrow \{$IN, OUT, UNDEC, BLANK, MUST\_OUT$\}$.

\subsubsection{Basic Algorithm}

We define some labellings which correspond to the different states of our algorithm while traversing the binary tree. These definitions and the algorithms following them are inspired by the corresponding work for AFs, particularly \cite{Nofal:Atkinson:Dunne:2016}.

\begin{definition}
A labelling $Lab$ of $\F$ is:
\begin{itemize}
    \item the \textbf{initial labelling} of $\mathcal{F}$ iff  $Lab = 
    \{(\alpha, \text{BLANK}): \alpha \in \mathcal{A} \setminus S\} \cup \{(\beta, \text{UNDEC}): \beta \in S\}$ where $S \subseteq \mathcal{A}$ is the set of all $\gamma \in \mathcal{A}$ s.t.\ $\minabaattack{\{\gamma\}}{Cl(\{\gamma\})}$.
    \item a \textbf{terminal labelling} of $\mathcal{F}$ iff for each $\alpha \in \mathcal{A}, Lab(\alpha) \neq$ BLANK.
    \item a \textbf{hopeless labelling} of $\mathcal{F}$ iff there exists an $\alpha \in$ {MUST\_OUT} s.t. for all $\beta \in \mathcal{A}$,
 if $\minabaattack{\{\beta\}}{\{\alpha\}}$ then $Lab(\beta) \in $ \{OUT, UNDEC\}.
    \item an \textbf{admissible labelling} of $\mathcal{F}$ iff $Lab$ is a terminal labelling of $\mathcal{F}$ and MUST\_OUT = $\emptyset$.
    \item a \textbf{preferred labelling} of $\mathcal{F}$ iff $Lab$ is an admissible labelling of $\mathcal{F}$ and $\{x: Lab(x)=\text{IN}\}$ is maximal (w.r.t.\ $\subseteq$) among all admissible labellings of $\mathcal{F}$.
\end{itemize}
\end{definition}

In the above, the initial labelling corresponds to the root of the binary tree. Terminal labellings correspond to leaf nodes of the tree. If there are no MUST\_OUT assumptions in a terminal labelling, then we have an admissible labelling. Preferred labellings are those which are maximally admissible. Finally, hopeless labellings are those which are guaranteed to not reach an admissible labelling. 

Next, we define two procedures of our algorithm, which correspond to taking the left or right path down our binary tree. 

\begin{definition} \label{defn-ABA-left-transition}
Let $Lab$ be a labelling of $\mathcal{F}$, and $\alpha \in \mathcal{A}$.
\begin{itemize}
    \item The \textbf{left-transition} of $Lab$ to the new labelling $Lab'$ using $\alpha$ is defined by:
    \begin{enumerate*} 
        \item $ Lab' \leftarrow Lab$.
        \item For each $\beta \in Cl(\{\alpha\})$, $ Lab'(\beta) \leftarrow IN$.
        \item For each $\gamma \in \mathcal{A}$, if $\minabaattack{\alpha}{Cl(\{\gamma\})}$, $Lab'(\gamma) \leftarrow$ OUT.
        \item For each $\delta \in \mathcal{A}$, with $Lab(\delta) \neq$ OUT, if $\minabaattack{\delta}{Cl(\{\alpha\})}$, $Lab'(\delta) \leftarrow$ MUST\_OUT.
       \end{enumerate*}
\item The \textbf{right-transition} of $Lab$ to the new labelling $Lab'$ using $\alpha$ is defined by:
    \begin{enumerate*}
        \item $ Lab' \leftarrow Lab$.
        \item For each $\beta$ in $\mathcal{A}$, if $\alpha \in Cl(\{\beta\})$, $ Lab'(\beta) \leftarrow UNDEC$.
    \end{enumerate*}
\end{itemize}
\end{definition}

A left transition starts by labelling all assumptions in the closure of some target assumption as IN. We then label any assumptions whose closure is minimally attacked by the target assumption as OUT. After that, we can add those assumptions which minimally attack the closure of the target assumptions to MUST\_OUT. In a right-transition, we label all assumptions whose closure contains the target assumption as UNDEC.

Algorithm \ref{algorithm:basic-preferred} enumerates all the preferred extensions of $\F$. 

\begin{proposition}
\label{prop:basic}
Algorithm \ref{algorithm:basic-preferred} solves the \complexityProblem[prf][$\mathcal{F}$]{EE} problem. 
\end{proposition}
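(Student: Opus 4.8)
The plan is to establish soundness and completeness of Algorithm~\ref{algorithm:basic-preferred} by relating every labelling $Lab$ reached during the traversal to the candidate assumption set $S_{Lab} = \{\alpha \in \A : Lab(\alpha) = \text{IN}\}$, and showing that the admissible labellings at which the procedure halts are exactly those $Lab$ with $S_{Lab}$ admissible, the preferred filter then picking out the $\subseteq$-maximal ones. First I would fix the intended reading of the five labels as an invariant to be preserved along every root-to-node path: IN marks the assumptions committed to $S_{Lab}$ (always a closed set, since the left-transition adds whole closures $Cl(\{\alpha\})$ at once); OUT marks assumptions whose closure is attacked by $S_{Lab}$; UNDEC marks assumptions that can no longer be added consistently; MUST\_OUT marks assumptions excluded from $S_{Lab}$ that currently attack $S_{Lab}$ but are not yet counter-attacked by it; and BLANK marks undecided assumptions. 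The core lemma is that left- and right-transitions preserve this invariant, using Lemma~\ref{min-attack-lemma} to reduce every attack to a singleton attack so that the OUT/MUST\_OUT bookkeeping via minimal attacks is exhaustive.

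Given the invariant, soundness would follow quickly. A terminal labelling with MUST\_OUT $= \emptyset$ (i.e.\ an admissible labelling) has $S_{Lab}$ conflict-free and closed by construction, and the emptiness of MUST\_OUT together with the invariant says that every attacker of $S_{Lab}$ is counter-attacked, i.e.\ $S_{Lab}$ defends itself; this is precisely the characterisation of admissibility exploited in Proposition~\ref{bipolar ABA-VER_adm}, so $S_{Lab}$ is admissible. The preferred test then restricts the output to admissible labellings whose IN-set is $\subseteq$-maximal, which by definition of preferred extensions ($\subseteq$-maximal admissible) yields exactly preferred extensions; in practice maximality is enforced by comparing each newly found admissible $S_{Lab}$ against the solutions collected so far and retaining only the $\subseteq$-maximal ones, which also guarantees that no duplicates are output.

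For completeness I would argue that every preferred (indeed every admissible) extension $E$ is generated. Associate to $E$ the unique root-to-leaf path that takes the left branch on an assumption exactly when that assumption lies in $E$ and the right branch otherwise; I would show by induction along this path that the running labelling agrees with $E$ (its IN-set equals $E$ restricted to the already decided assumptions) and, crucially, that no hopeless labelling is ever encountered: since $E$ defends itself, every assumption that the invariant would place in MUST\_OUT is in fact attacked by some assumption of $E$, which the path eventually labels IN, so the pruning condition (a MUST\_OUT assumption all of whose minimal attackers are OUT or UNDEC) never triggers. Hence the leaf reached is an admissible labelling with $S_{Lab} = E$, and the $\subseteq$-maximal ones among these are output. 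Termination is immediate because the search tree is binary of depth at most $|\A|$, and Lemma~\ref{lemma-no-empty-deductions-in-bipolar ABA} underpins correctness of the initial labelling (the empty set is always admissible, and no assumption is spuriously self-attacking via an empty deduction).

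I expect the main obstacle to be the invariant-preservation lemma for the two transitions, and in particular proving that the hopeless-labelling pruning is sound---that collapsing a subtree whenever some MUST\_OUT assumption can no longer be counter-attacked never discards a branch leading to a genuine admissible (hence possibly preferred) extension. Getting the closure interaction right here is delicate: because decisions propagate over $Cl(\{\alpha\})$ rather than over single assumptions, I must verify that OUT, UNDEC and MUST\_OUT are assigned consistently to whole closures and that the minimal-attack reduction of Lemma~\ref{min-attack-lemma} really does capture all attacks against these closures.
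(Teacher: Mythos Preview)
Your decomposition into soundness and completeness, and your treatment of closure, conflict-freeness and defence via a labelling invariant, matches the paper's proof outline and is in fact considerably more detailed in several places (the explicit invariant, the hopeless-pruning analysis, the path construction for completeness).

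There is, however, one genuine gap in the soundness argument for maximality. You describe the filter as ``retaining only the $\subseteq$-maximal ones'', but line~4 of Algorithm~\ref{algorithm:basic-preferred} only tests whether the \emph{newly found} admissible set is a subset of something already in $E$; it never removes an element of $E$ that is later superseded. So if a non-maximal admissible set were discovered \emph{before} the preferred extension containing it, that set would enter $E$ and remain there, and the output would be wrong. The reason this cannot happen is exactly the point the paper singles out and that your write-up omits: because left-transitions are explored before right-transitions, whenever $S \subsetneq T$ are both admissible and both reachable, the traversal reaches $T$ first. (At the first node where the paths to $S$ and $T$ diverge, the selected assumption $\alpha$ must satisfy $\alpha \in T \setminus S$, since the left branch forces $\alpha$ IN and the right branch forces it UNDEC, while $S \setminus T = \emptyset$; hence $T$ lies in the left subtree and is visited first.) Your argument needs this traversal-order observation to justify the one-directional subset check in line~4; without it, the claim that only preferred extensions are output does not go through.
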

\begin{proof}[Proof outline]
\leavevmode
\emph{Completeness}. Algorithm \ref{algorithm:basic-preferred} builds every closed, conflict-free subset of $\mathcal{F}$. This is guaranteed by our definitions of initial labelling, left-transition and right-transition.

\emph{Soundness}. We need to show that the generated sets are maximal and admissible. Maximality is ensured by line 4 together with the fact that maximal sets are constructed first (by performing left-transitions before right-transitions). For admissibility we need to show that the sets in $E$ are closed, conflict-free and defend themselves. Closure is guaranteed because as soon as a new assumption is labelled IN, so is every element in its closure. Conflict-freeness is guaranteed since any assumption attacked by the set of IN assumptions is immediately labelled OUT. Defence is guaranteed by our usage of the MUST\_OUT label and hopeless labellings. 
\end{proof}

\begin{algorithm}[h]  \SetKwInOut{Input}{input}  
  \Input{$\genericabaframework$ is a bipolar ABA framework.\\
  \mbox{$Lab:\mathcal{A} \rightarrow \{$IN, OUT, UNDEC, BLANK, MUST\_OUT$\}$}\\
  $E \subseteq 2^\mathcal{A}$
  }
  \SetKwInOut{Output}{output} 
   \Output{    \mbox{$Lab: \mathcal{A} \rightarrow \{$IN, OUT, UNDEC, BLANK, MUST\_OUT$\}$}\\
  $E \subseteq 2^\mathcal{A}$
  }
  \lIf{$Lab$ is a hopeless labelling}  {
      \KwRet
  }
  \If{$Lab$ is a terminal labelling}{
      \If{$Lab$ is an admissible labelling} {
        \If{$\{x: Lab(x) =$ IN$\}$ is not a subset of any set in $E$} {
          $E \leftarrow E \cup \{\{x: Lab(x) =$ IN$\}\}$
        }
      }
      \KwRet\;
  }
  Select any assumption $\alpha \in \mathcal{A}$ with $Lab(\alpha) =$ BLANK\;
  Get a new labelling $Lab'$ by applying the left-transition of $Lab$ using $x$\;
  Call Enumerate\_Preferred$(\mathcal{F}, Lab', E)$\;
  \mbox{Get a new labelling $Lab'$ using the right-transition of $Lab$ with $x$\;}
  Call Enumerate\_Preferred$(\mathcal{F}, Lab', E)$.
  \caption{Enumerate\_Preferred$(\mathcal{F}, Lab, E)$}
   \label{algorithm:basic-preferred}
\end{algorithm}

Figure \ref{fig:labelling-preferred-exampl} shows an example of Algorithm \ref{algorithm:basic-preferred} calculating the preferred extensions of a bipolar ABA framework. The algorithm starts with the initial labelling and forks to the left and right by performing the appropriate transition procedure (represented by left and right arrows in the figure). Leaf nodes are identified as either admissible or preferred extensions (although only preferred ones are saved). Moreover, the figure shows how the notion of hopeless labellings reduces the search-space of the algorithm. 

\begin{figure}[h]
\centering
  \includegraphics[width=0.98\linewidth]{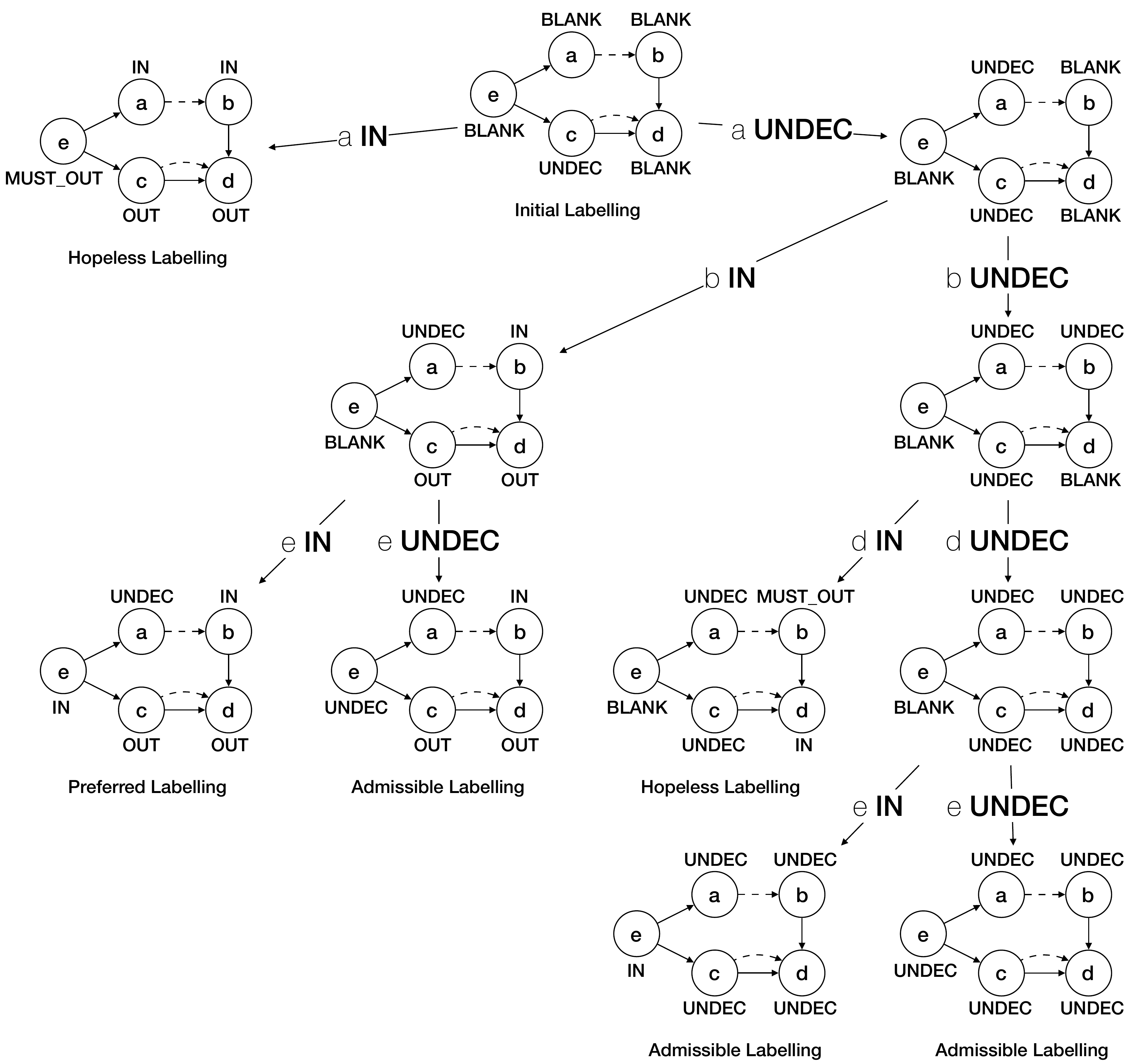}
  \caption{The behaviour of Algorithm \ref{algorithm:basic-preferred} in calculating preferred extensions of a specific bipolar ABA framework. In the above, solid lines correspond to rules of the form $\overline{\alpha} \leftarrow \beta$ and dotted lines to rules of the form $\alpha \leftarrow \beta$, where $\alpha, \beta \in \mathcal{A}$.}
  \label{fig:labelling-preferred-exampl}
\end{figure}

\subsubsection{Improved Algorithm}
We now discuss several improvements of the basic algorithm given above, similarly to \cite{Nofal:Atkinson:Dunne:2016}. 

Algorithm {\ref{algorithm:basic-preferred} can be improved by introducing influential assumptions. The idea is to select the most influential assumption for a left-transition to reach a terminal or hopeless labelling faster.

\begin{definition}
Let $Lab$ be a labelling of $\mathcal{F}$, and $\alpha \in \mathcal(A)$ be such that $Lab(\alpha) =$ BLANK. 
Then $\alpha$ is \textbf{influential} iff for all $\beta \in \mathcal{A}$ with $Lab(\beta) =$ BLANK, $h(\alpha) \geq h(\beta)$ where $h(x)$ is defined as 
the number of rules in $\mathcal{R}$  which contain the assumption $x$ in their head or body.
\end{definition}

 Another improvement comes from realising that assumptions which are minimally attacked only by assumptions labelled OUT or MUST\_OUT have to be labelled IN if the labelling is to evolve into a preferred one. This is because they must be defended by any admissible set reachable from the current labelling.

\begin{definition}
    Let $Lab$ be a labelling of $\mathcal{F}$. Then $\alpha \in \mathcal{A}$ is a \textbf{must\_in assumption} iff $Lab(\alpha) =$ BLANK and for all $\beta \in \mathcal{A}$ where $\minabaattack{\beta}{\{\alpha\}}$, $Lab(\alpha) \in \{$OUT, MUST\_OUT$\}$. The \textbf{labelling propagation} of $Lab$ consists of the following actions:
\begin{enumerate*}
        \item If there is no must\_in assumption, halt.
        \item Pick a must\_in assumption $\alpha$.
        \item Do $Lab(\alpha) \leftarrow$ IN.
        \item For each $\beta \in Cl(\{\alpha\})$, do $ Lab(\beta) \leftarrow IN$.
        \item For each $\gamma \in \mathcal{A}$, if $\minabaattack{\alpha}{Cl(\{\gamma\})}$, do $Lab(\gamma) \leftarrow$ OUT.
        \item For each $\delta \in \mathcal{A}$ with $Lab(\delta) \neq$ OUT, if $\minabaattack{\delta}{Cl(\{\alpha\})}$, do $Lab'(\delta) \leftarrow$ MUST\_OUT.
        \item Return to step 1.
\end{enumerate*}
\end{definition}

We now propose Algorithm \ref{algorithm:optimized-preferred} which adds the following improvements to Algorithm \ref{algorithm:basic-preferred}. 
\begin{enumerate*}
\item The left transition is performed using the most influential assumption.
\item The recursive call after a right transition is replaced with a while loop structure. 
\item Hopeless labellings are checked for every time a labelling changes.
\item  Labelling propagation is added to the start of the algorithm.
 \end{enumerate*}

\begin{algorithm}[h] 
  \SetKwInOut{Input}{input}  
  \Input{$\genericabaframework$ is a bipolar ABA framework.\\
  \mbox{$Lab: \mathcal{A} \rightarrow \{$IN, OUT, UNDEC, BLANK, MUST\_OUT$\}$}\\
  $E \subseteq 2^\mathcal{A}$
  }
  \SetKwInOut{Output}{output} 
   \Output{    \mbox{$Lab: \mathcal{A} \rightarrow \{$IN, OUT, UNDEC, BLANK, MUST\_OUT$\}$}\\
  $E \subseteq 2^\mathcal{A}$
  }
  Propagate $Lab$\;
  \lIf{$Lab$ is a hopeless labelling}  {
      \KwRet
  }
  \While {$Lab$ is not a terminal labelling} {
    Select a new assumption $\alpha \in \mathcal{A}$ s.t. $\alpha$ is influential\;
    Get a new labelling called $Lab'$ by applying the left-transition of $Lab$ using $\alpha$\;    
    \If{$Lab'$ is not a hopeless labelling}  {
      Call Enumerate\_Preferred$(\mathcal{F}, Lab', E)$\;
    }
    Update $Lab$ by applying the right-transition of $Lab$ using $\alpha$\;
    \lIf{$Lab$ is a hopeless labelling}  {
      \KwRet
    }
  }
  \If{$Lab$ is an admissible labelling} {
    \If{$\{x: Lab(x) =$ IN$\}$ is not a subset of any set in $E$} {
      $E \leftarrow E \cup \{\{x: Lab(x) =$ IN$\} \}$.
    }
  }
  \caption{Enumerate\_Preferred$(\mathcal{F}, Lab, E)$}
  \label{algorithm:optimized-preferred}
\end{algorithm}

\begin{proposition}
\label{prop:improved}
Algorithm \ref{algorithm:optimized-preferred} solves the \complexityProblem[prf][$\mathcal{F}$]{EE} problem. \end{proposition}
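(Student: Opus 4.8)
The plan is to prove Proposition~\ref{prop:improved} by showing that Algorithm~\ref{algorithm:optimized-preferred} computes exactly the same set $E$ as the already-verified basic Algorithm~\ref{algorithm:basic-preferred} (Proposition~\ref{prop:basic}), so that soundness and completeness transfer for free. The strategy is therefore \emph{reduction to the basic case}: rather than re-deriving closure, conflict-freeness, defence and maximality from scratch, I would argue that each of the four optimisations is \emph{behaviour-preserving} on the set of admissible/preferred labellings reached at the leaves, and hence cannot add spurious extensions (soundness) nor omit genuine ones (completeness).

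First I would treat the two structural optimisations that only reorganise the search without changing which leaves are visited. For improvement~(2), I would observe that unrolling the recursive right-transition call into a \texttt{while} loop is a standard tail-call elimination: the loop body performs a left-transition (with its recursive call) followed by a right-transition update of $Lab$, which is exactly the sequence of two recursive calls in Algorithm~\ref{algorithm:basic-preferred}, just with the second recursion flattened into iteration. For improvement~(1), I would note that the choice of \emph{which} BLANK assumption $\alpha$ drives the left-transition is immaterial to \emph{correctness}: Proposition~\ref{prop:basic} holds for \emph{any} selection of a BLANK assumption in line~6, so fixing the choice to an influential one (maximising $h$) is merely a heuristic that changes the traversal order, not the collection of closed conflict-free sets explored. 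The only subtlety to record is that ordering still performs left before right, preserving the maximality argument (maximal sets constructed first) that soundness relied on in Proposition~\ref{prop:basic}.

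Next I would handle the two optimisations that genuinely prune or pre-label, and these carry the real content. For improvement~(4), labelling propagation, the key lemma is that every must\_in assumption $\alpha$ — one whose every minimal attacker is already OUT or MUST\_OUT — \emph{must} be labelled IN in any admissible labelling extending the current one, because any admissible set reachable from $Lab$ must defend $\alpha$ against all its minimal attackers, and by Lemma~\ref{min-attack-lemma} those attackers are singletons already excluded. Hence propagation only forces labels that the basic algorithm would eventually assign on the unique surviving branch, so no preferred extension is lost, and since propagation replicates exactly the IN/OUT/MUST\_OUT bookkeeping of a left-transition, no non-admissible set is admitted. For improvement~(3), the extra hopeless-labelling checks, I would appeal to the definitional fact that once a labelling is hopeless it can never evolve into an admissible one (an unavoidable MUST\_OUT assumption with no available defender persists under all further transitions, since transitions never relabel OUT/UNDEC assumptions back to IN); thus pruning at any such node removes only subtrees whose leaves the basic algorithm would have discarded anyway.

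The main obstacle I expect is improvement~(4): I must verify that labelling propagation is \emph{confluent and sound} — that forcing must\_in assumptions IN, and cascading the induced OUT/MUST\_OUT labels, never prematurely commits the search to a label inconsistent with some preferred extension, and that the order in which must\_in assumptions are processed does not affect the final labelling. Establishing this requires a careful invariant, maintained across left-transitions, right-transitions and propagation steps, asserting that the IN set is always closed and conflict-free and that every MUST\_OUT assumption is one the candidate extension is still obliged to attack; the remaining improvements then follow comparatively routinely from Proposition~\ref{prop:basic}.
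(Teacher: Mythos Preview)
Your proposal is correct and follows essentially the same approach as the paper: both argue that Algorithm~\ref{algorithm:optimized-preferred} is correct by reduction to Proposition~\ref{prop:basic}, checking that each of the four modifications (influential-assumption selection, loop unrolling of the right recursion, extra hopeless checks, labelling propagation) preserves soundness and completeness. Your treatment is considerably more detailed than the paper's terse proof outline; one minor wobble is your justification for improvement~(4), where you say a must\_in assumption must be IN in any \emph{admissible} labelling extending $Lab$ because such a set ``must defend $\alpha$''---the correct statement (which the paper gives) is that propagation only excludes admissible labellings that are not \emph{preferred}, since an admissible set need not contain every assumption it defends, but a maximal one must.
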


\begin{proof}[Proof outline]
We show that none of the changes introduced in Algorithm \ref{algorithm:optimized-preferred} compromise soundness or completeness.  
\begin{enumerate*}
\item     Selecting the most influential assumption does not compromise left and right transitions, because by definition this assumption will be labelled BLANK.
\item Changing the right transition to be performed as a while loop doesn't change the order of operations.
\item Checking for hopeless labellings does not have any side effects, so doing it more often will not either.
\item Labelling propagation excludes only admissible labellings which are not preferred. 
 \end{enumerate*}
\end{proof} 

\subsection{Enumeration of Admissible and Set-Stable Extensions}

We next give algorithms for enumerating admissible and set-stable extensions of $\F$. 

\subsubsection{Enumeration of Admissible Extensions}

Algorithm \ref{algorithm:optimized-preferred} can be adapted to find admissible extensions. To achieve this, we first need to drop the maximality check. Moreover, the labelling propagation step needs to be removed. Indeed, if we do not remove it, then there is a risk that some admissible sets will be overlooked since these sets do not necessarily contain every assumption that they defend. 
This modification is achieved in Algorithm \ref{algorithm:admissible-extension-enum}.

\begin{algorithm}[h]   
\SetKwInOut{Input}{input}  
  \Input{$\genericabaframework$ is a bipolar ABA framework.\\
  \mbox{$Lab: \mathcal{A} \rightarrow \{$IN, OUT, UNDEC, BLANK, MUST\_OUT$\}$}\\
  $E \subseteq 2^\mathcal{A}$
  }
  \SetKwInOut{Output}{output} 
   \Output{    \mbox{$Lab: \mathcal{A} \rightarrow \{$IN, OUT, UNDEC, BLANK, MUST\_OUT$\}$}\\
  $E \subseteq 2^\mathcal{A}$
  }
  \lIf{$Lab$ is a hopeless labelling}  {
      \KwRet
  }
  \While {$Lab$ is not a terminal labelling} {
    Select a new assumption $\alpha \in \mathcal{A}$ s.t.\ $\alpha$ is influential\;
    Get a new $Lab$ called $Lab'$ by applying the left-transition of $Lab$ using $\alpha$\;    
    \If{$Lab'$ is not a hopeless labelling}  {
      Call Enumerate\_Admissible$(\mathcal{F}, Lab', E)$\;
    }
    Update $Lab$ by applying the right-transition of $Lab$ using $\alpha$\;
    \lIf{$Lab$ is a hopeless labelling}  {
      \KwRet
    }
  }
  \If{$Lab$ is an admissible labelling} {
    $E \leftarrow E \cup \{\{x: Lab(x) =$ IN$\}\}$
  }
  \caption{Enumerate\_Admissible$(\mathcal{F}, Lab, E)$}
  \label{algorithm:admissible-extension-enum}
\end{algorithm}

Therefore, we have the following result:

\begin{proposition}
\label{prop:admissible}
Algorithm \ref{algorithm:admissible-extension-enum} solves the \complexityProblem[adm][$\mathcal{F}$]{EE} problem.
\end{proposition}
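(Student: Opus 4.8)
The plan is to prove correctness exactly as for the preferred algorithms (Propositions~\ref{prop:basic} and~\ref{prop:improved}), by establishing soundness and completeness, and then to verify that the two modifications carrying Algorithm~\ref{algorithm:optimized-preferred} into Algorithm~\ref{algorithm:admissible-extension-enum} preserve these properties. Those modifications are (i) dropping the maximality test at the leaves, so that \emph{every} admissible labelling contributes its IN set to $E$, and (ii) deleting the labelling-propagation step. Since the binary-tree traversal, the left/right transitions, the influential-assumption heuristic and the hopeless-labelling pruning are all inherited unchanged from Algorithm~\ref{algorithm:optimized-preferred}, I would reuse the invariants already argued there and concentrate only on what the two changes affect.

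\emph{Soundness.} I would show that every set recorded in $E$, i.e.\ every $\{x : Lab(x) = \text{IN}\}$ taken at a terminal admissible labelling, is admissible. Closure is immediate, as the left-transition labels the whole closure $Cl(\{\alpha\})$ of a newly included assumption IN, so the IN set is always closed. Conflict-freeness holds because step 3 of the left-transition labels OUT every assumption whose closure is (minimally) attacked by an IN assumption, so no IN assumption attacks another. Defence follows from the admissible-labelling condition $\text{MUST\_OUT} = \emptyset$: appealing to Lemma~\ref{min-attack-lemma} to reduce defence to singleton (minimal) attacks, an empty MUST\_OUT means every such attacker of an IN assumption has already been labelled OUT, i.e.\ counter-attacked by the IN set. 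Dropping the maximality check cannot introduce unsound outputs, since a set is still recorded only at an admissible labelling.

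\emph{Completeness.} Here I would fix an arbitrary admissible extension $E^*$ and exhibit a root-to-leaf path whose leaf satisfies $\text{IN} = E^*$. At each node the heuristic fixes the branching assumption $\alpha$; I take the left transition when $\alpha \in E^*$ (so, $E^*$ being closed, $Cl(\{\alpha\}) \subseteq E^*$) and the right transition otherwise. I would then check that the path stays consistent with $E^*$: no member of $E^*$ is ever labelled OUT, since that would force $E^*$ to attack its own closure, contradicting conflict-freeness; and every assumption pushed into MUST\_OUT lies outside $E^*$ and is eventually labelled OUT because $E^*$ defends itself. Consequently the labelling along this path is never pruned as hopeless and terminates with $\text{MUST\_OUT} = \emptyset$ and $\text{IN} = E^*$, so $E^*$ is enumerated.

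The hard part will be this completeness direction, and specifically justifying the removal of labelling propagation. With propagation present, a defended assumption \emph{not} belonging to $E^*$ would be forced IN via the must\_in rule, so $E^*$ (which need not contain everything it defends) would be skipped --- precisely the failure flagged in the discussion preceding Algorithm~\ref{algorithm:admissible-extension-enum}. I must therefore argue two things at once: that deleting propagation makes every admissible $E^*$ reachable by the path above, and that it does not reintroduce spurious leaves, which is guaranteed because the leaf test still requires an admissible labelling. Verifying that these two effects are exactly complementary is the crux of the argument.
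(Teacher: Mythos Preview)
Your approach is correct and mirrors the paper's, which in fact gives no explicit proof for this proposition: the paper simply relies on the preceding paragraph explaining that Algorithm~\ref{algorithm:admissible-extension-enum} is Algorithm~\ref{algorithm:optimized-preferred} with the maximality check dropped and labelling propagation removed (the latter because admissible sets need not contain every assumption they defend). Your plan to inherit soundness and completeness from Propositions~\ref{prop:basic} and~\ref{prop:improved} and then verify that these two deletions preserve both properties is precisely that argument, only made considerably more explicit than the paper bothers to.
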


\subsubsection{Enumeration of Set-Stable Extensions}

Algorithm \ref{algorithm:optimized-preferred} can also be adapted to find set-stable extensions. By definition any set-stable extension attacks the closure of all assumption sets it does not contain. Thus, the UNDEC label is no longer useful since any assumption which would have been labelled UNDEC in the case of preferred semantics should now be labelled MUST\_OUT. We change some of our definitions accordingly. 

\begin{definition}
\label{defn:set-stable-lab}
Let $Lab$ be a labelling of $\mathcal{F}$. Then:
\begin{itemize}
    \item $Lab$ is the \textbf{initial set-stable labelling} of $\mathcal{F}$ iff $Lab = \break\{(\alpha, \text{BLANK}): \alpha \in \mathcal{A} \setminus S\} \cup \{(\beta, \text{MUST\_OUT}): \beta \in S\}$ where $S \subseteq \mathcal{A}$ is the set of all $\gamma \in \mathcal{A}$ s.t. $\minabaattack{\{\gamma\}}{Cl(\{\gamma\})}$.
    \item Let $\alpha$ be an assumption in $\mathcal{A}$. Then the \textbf{set-stable right-transition} of $Lab$ to the new labelling $Lab'$ using $\alpha$ is defined by actions:

\begin{enumerate*}
        \item $ Lab' \leftarrow Lab$.
        \item For each $\delta \in Cl(\{\alpha\})$ with $Lab(\delta) \neq$ OUT, $Lab'(\delta) \leftarrow$ MUST\_OUT.
\end{enumerate*}
    \item $Lab$ is a \textbf{set-stable labelling} of $\mathcal{F}$ iff $Lab$ is a terminal labelling of $\mathcal{F}$ and MUST\_OUT = $\emptyset$.
\end{itemize}
\end{definition}

The modifications are achieved in Algorithm \ref{algorithm:set-stable-extension-enum}. 
Thus, as with enumeration of admissible extensions, 
we have the following result:

\begin{proposition}
\label{prop:set-stable}
Algorithm \ref{algorithm:set-stable-extension-enum} solves the \complexityProblem[set-stb][$\mathcal{F}$]{EE} problem.
\end{proposition}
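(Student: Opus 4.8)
The plan is to reuse the soundness-and-completeness template already applied to the preferred enumeration algorithm (Proposition~\ref{prop:basic}, refined in Proposition~\ref{prop:improved}), exploiting the fact that Algorithm~\ref{algorithm:set-stable-extension-enum} shares the very same backtracking skeleton as Algorithm~\ref{algorithm:optimized-preferred} and differs only in the three components redefined in Definition~\ref{defn:set-stable-lab}: the initial set-stable labelling, the set-stable right-transition, and the acceptance condition for a set-stable labelling. Accordingly, I would argue that every claim in the earlier proofs that does not touch these three components carries over verbatim, and then re-establish only the parts genuinely affected by the changes, exactly as was done when passing to admissible semantics in Proposition~\ref{prop:admissible}.

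For completeness, I would observe that the tree is still explored by branching on each BLANK assumption, including it via the (unchanged) left-transition or excluding it via the set-stable right-transition. The only seeding difference is that the initial set-stable labelling places each self-attacking assumption, i.e.\ each $\gamma$ with $\minabaattack{\{\gamma\}}{\cl(\{\gamma\})}$, into MUST\_OUT rather than UNDEC; since such a $\gamma$ can never belong to a conflict-free extension, this relabelling is harmless for coverage. Hence every closed, conflict-free candidate set is still reachable by some root-to-leaf path, which is what completeness requires.

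For soundness, I would show that each emitted set $E = \{x : Lab(x) = \text{IN}\}$ is set-stable, i.e.\ closed, conflict-free and satisfying $\abaattack{E}{\cl(\{\alpha\})}$ for every $\alpha \in \A \setminus E$. Closure and conflict-freeness depend only on the left-transition, which labels the entire closure of each newly-IN assumption as IN and labels OUT every assumption whose closure is attacked by an IN assumption; the argument of Proposition~\ref{prop:basic} therefore applies unchanged. The crux is the set-stable attacking condition, and here the key observation is that the UNDEC label is never assigned under the set-stable machinery: the initial set-stable labelling uses only BLANK and MUST\_OUT, and the set-stable right-transition moves $\cl(\{\alpha\})$ into MUST\_OUT rather than UNDEC. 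Consequently, in any set-stable labelling, which is terminal with MUST\_OUT $= \emptyset$, every non-IN assumption must carry the label OUT. Since the left-transition labels $\gamma$ as OUT exactly when some IN assumption minimally attacks $\cl(\{\gamma\})$, it follows that $\abaattack{E}{\cl(\{\gamma\})}$ for every $\gamma \notin E$, which is precisely the set-stable requirement.

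The step I expect to be the main obstacle is the last one: rigorously justifying that MUST\_OUT $= \emptyset$, together with the absence of UNDEC, forces every non-member to be OUT, and that being OUT faithfully records that $E$ attacks the \emph{whole} closure of that assumption. This requires checking that the set-stable right-transition's placement of $\cl(\{\alpha\})$ into MUST\_OUT, combined with the termination requirement, genuinely rules out any labelling that is conflict-free and closed yet leaves some non-member's closure unattacked; in other words, one must verify that the stronger set-stable condition cannot be silently bypassed by the backtracking search, whereas the remaining bookkeeping is routine.
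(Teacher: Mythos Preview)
Your proposal is correct and mirrors the paper's own treatment: the paper does not give a separate formal proof for this proposition but simply observes that replacing UNDEC by MUST\_OUT throughout (in the initial labelling and the right-transition) forces every non-IN assumption in a terminal labelling to be OUT, whence the set-stable attacking condition follows, and otherwise defers to the soundness-and-completeness template already established for Algorithm~\ref{algorithm:optimized-preferred}. Your write-up is in fact more explicit than the paper's one-line justification, but the underlying argument is identical.
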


\begin{algorithm}[h] 
  \SetKwInOut{Input}{input}  
  \Input{$\genericabaframework$ is a bipolar ABA framework.\\
  $Lab: \mathcal{A} \rightarrow \{$IN, OUT, BLANK, MUST\_OUT$\}$\\
  $E \subseteq 2^\mathcal{A}$
  }
  \SetKwInOut{Output}{output} 
   \Output{    \mbox{$Lab: \mathcal{A} \rightarrow \{$IN, OUT, BLANK, MUST\_OUT$\}$}\\
  $E \subseteq 2^\mathcal{A}$
  }
  Propagate $Lab$\;
  \lIf{$Lab$ is a hopeless labelling}  {
      \KwRet
  }
  \While {$Lab$ is not a terminal labelling} {
    Select a new assumption $\alpha \in \mathcal{A}$ s.t. $\alpha$ is influential\;
    Get a new labelling called $Lab'$ by applying the left-transition of $Lab$ using $\alpha$\;    
    \If{$Lab'$ is not a hopeless labelling}  {
      Call Enumerate\_Set-stable$(\mathcal{F}, Lab', E)$\;
    }
    Update $Lab$ by applying the right-transition-set-stable of $Lab$ using $\alpha$\;
    \lIf{$Lab$ is a hopeless labelling}  {
      \KwRet
    }
  }
  \If{$Lab$ is a set-stable labelling} {
    $E \leftarrow E \cup \{\{x: Lab(x) =$ IN$\}\}$
  }
  \caption{Enumerate\_Set-stable$(\mathcal{F}, Lab, E)$}
  \label{algorithm:set-stable-extension-enum}
\end{algorithm}

We note that while conceptually similar algorithms exist for enumerating extensions of AFs (see e.g.\ \cite{Nofal:Atkinson:Dunne:2014,Nofal:Atkinson:Dunne:2016, Charwat:Dvorak:Gaggl:Wallner:Woltran:2015}), adapting them to be used for bipolar ABA frameworks is not a trivial task. Specifically, the algorithms described in this section for admissible and preferred semantics are more complex than existing ones for AFs because extensions in bipolar ABA need to be closed, and the notion of defence in ABA is more involved than the corresponding notion of acceptance in abstract argumentation. Moreover, the ideas relating to set-stable labellings (Definition \ref{defn:set-stable-lab}) are new, as is Algorithm \ref{algorithm:set-stable-extension-enum}.


\section{Implementation and Evaluation}
\label{sec:implementation}

We now discuss our implementation of the algorithms mentioned in the previous section. 
In addition to directly enumerating extensions of bipolar ABA frameworks, our system is also capable of calculating extensions of other argumentation frameworks 
(particularly AFs and BAFs (as defined in \cite{Nouioua:Risch:2010,Cayrol:Lagasquie-Schiex:2013,Gabbay:2016}) 
by utilising extension-preserving mappings from these formalisms into bipolar ABA as discussed in \cite{Cyras:Schulz:Toni:2017}. 
This makes our tool more versatile than existing systems \cite{Charwat:Dvorak:Gaggl:Wallner:Woltran:2015}, almost none of which calculate extensions of BAFs.

\subsection{Implementation}
\label{subsec:implementation}

We now describe the control flow of our system as depicted graphically in Figure \ref{fig:control-flow-diagram}.
\begin{enumerate}
\item \textbf{Input argumentation framework.} The user inputs an argumentation framework to the system and specifies which semantics they would like the system to calculate extensions under. The argumentation framework can be an AF, one of the various BAFs, or a bipolar ABA framework. 
\item \textbf{Parse argumentation framework.} The system parses, and generates an internal representation of, the input framework. 
\item \textbf{Perform standard mapping.} If the input framework is not a bipolar ABA framework, the system transforms it into a bipolar ABA framework using the mappings defined in \cite{Cyras:Schulz:Toni:2017}.
\item \textbf{Perform labelling algorithms.} The bipolar ABA framework is inputted to the labelling algorithms defined in section \ref{sec:algorithms}.
\item \textbf{Output Extensions.} Our system terminates after outputting the extensions calculated by the labelling algorithms. 

\end{enumerate} 
\begin{figure}[h]
\centering
  \includegraphics[width=0.75\linewidth]{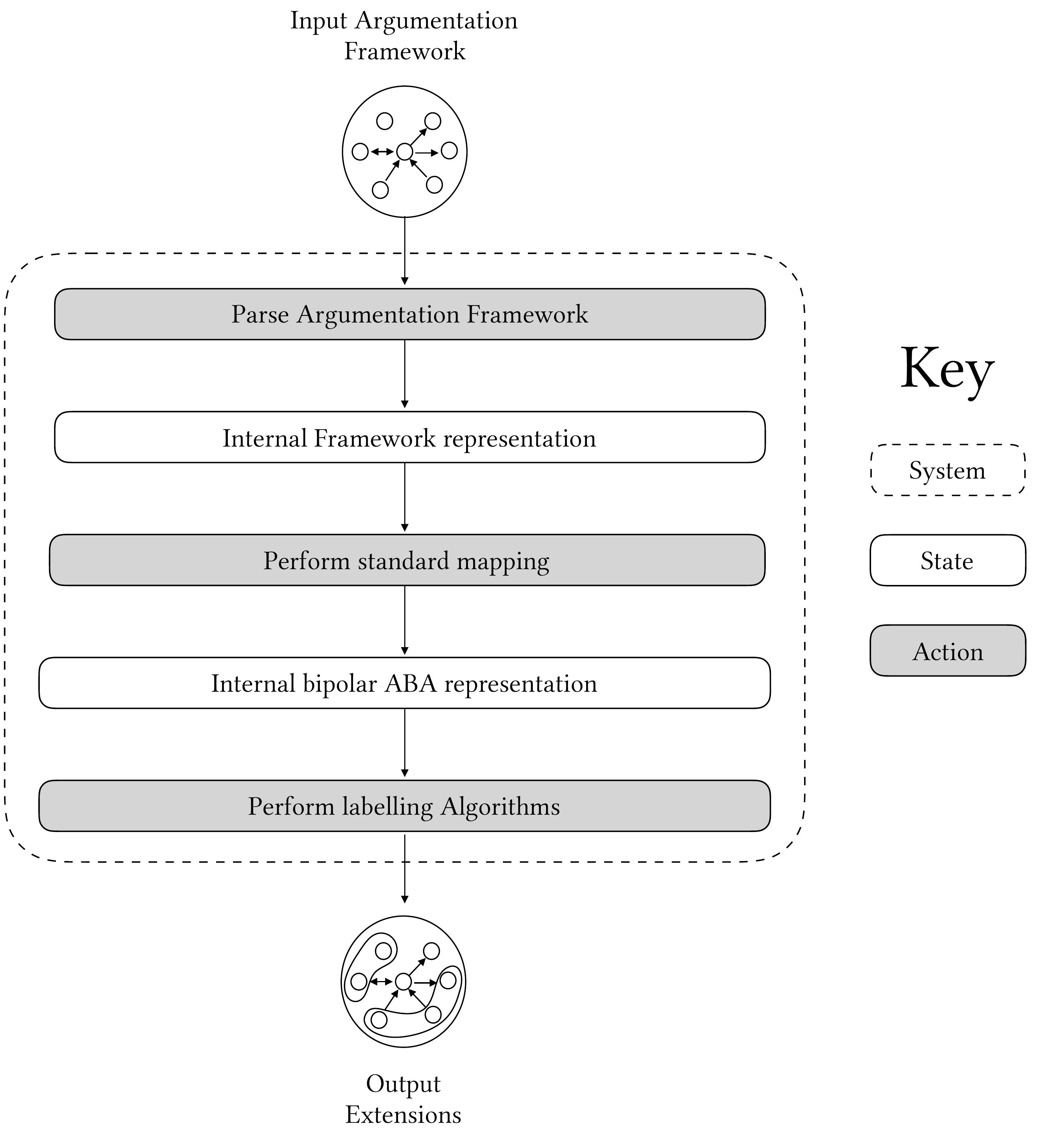}
  \caption{Control flow diagram of the system for computing extensions of bipolar argumentation frameworks.}
  \label{fig:control-flow-diagram}
\end{figure}

\subsubsection{Evaluation} 
\label{subsubsec:evaluation}

In order to test the scalability of our system, we generated 405 bipolar ABA frameworks of increasing size. To do this we adapted an existing benchmark generator from \cite{Craven:Toni:2016}, originally used to create flat ABA frameworks, and ensured that bipolar ABA frameworks are generated instead.
    
We input a tuple of parameters $(N_s, N_a, N_{rh}, N_{rph})$ to the generator in order to create our frameworks. The parameters are defined as follows:
\begin{enumerate*}
    \item $N_s$ is the total number of sentences in the framework, i.e., $|\mathcal{L}|$.
    \item $N_ a$ is the number of assumptions, i.e., $|\mathcal{A}|$, given as a percentage of the number of sentences.
    \item $N_ {rh}$ is the number of distinct sentences to be used as rule heads, given as an integer.
    \item $N_ {rph}$ is the number of rules per distinct rule head, given as an interval [min, max], where min and max are integers.
\end{enumerate*}

The specific parameters used were $(N_s , 37\%, N_s /2, [2, N_s /8]$) with the value of $N_s$ starting at 16, and increasing by 8 between subsequent frameworks. The largest framework consisted of 3248 sentences, 1202 assumptions and 174,365 rules. 

We measured the elapsed time between inputting a framework and outputting its extensions, under the admissible, preferred and set-stable semantics, for all generated frameworks. The elapsed times were very similar for all three semantics. Figure \ref{fig:experiment} shows the time taken to calculate extensions for each framework, averaged over the three semantics. These experiments were run on a home machine, with 16GB of memory and a 2.9GHZ, 2 core CPU. 

\begin{figure}[h]
\centering
  \includegraphics[width=\linewidth]{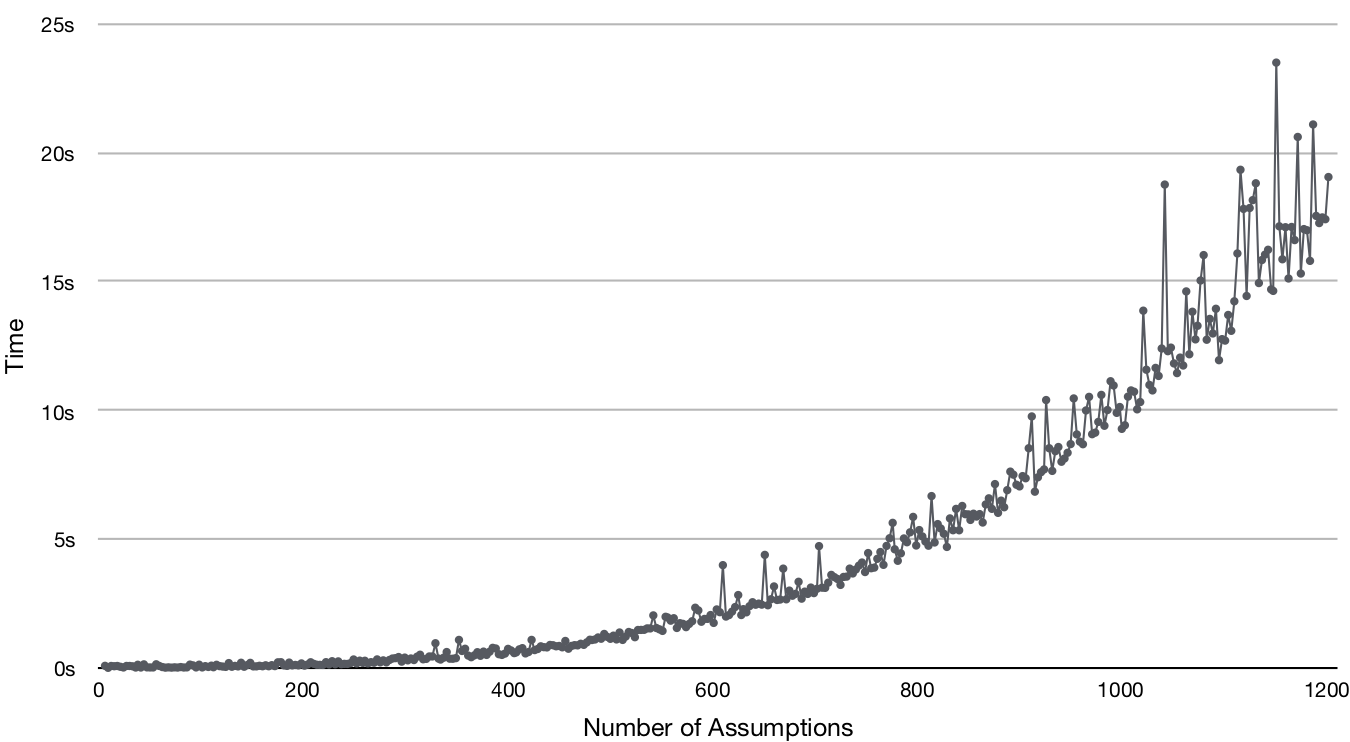}
  \caption{Time taken to calculate extensions of generated bipolar ABA frameworks (averaged over semantics).}
  \label{fig:experiment}
\end{figure}

The results show that even for the largest frameworks considered, our algorithms calculate extensions in under 25 seconds. We do see the performance begin to deteriorate as the size of the frameworks increase. This is expected since the backtracking method we rely on operates in $\mathcal{O}(2^n)$ in the worst case. 
Overall, these results demonstrate the feasibility of our algorithms as a means of generating extensions of large argumentation frameworks.

All in all, we have presented a scalable system for computing and enumerating all extensions of bipolar ABA frameworks under the semantics considered in this paper. 
Consequently, the system computes and enumerates extensions of various formulations of BAFs. 
In addition, it allows to answer questions to all the standard complexity problems considered in this paper.


\section{Related Work}
\label{sec:related}

In \cite{Fazzinga.et.al:2018}, the authors studied the complexity of BAFs under deductive support as in \cite{Cayrol:Lagasquie-Schiex:2013}. 
Specifically, \citeauthor{Fazzinga.et.al:2018} analysed the verification problem \textbf{VER} 
and established that it is in $\P$ under admissible, stable, complete and grounded semantics, 
and in $\coNP$ under preferred semantics. 
We instead studied the complexity of bipolar ABA, 
and thus indirectly of BAFs
not only under deductive support, 
but also under other interpretations of support and with diverging semantics, 
as captured in bipolar ABA \cite{Cyras:Schulz:Toni:2017}. 
In addition, we analysed all the complexity problems standard in argumentation, 
namely \textbf{EX}, \textbf{NE}, \textbf{VER}, \textbf{CA}, \textbf{SA} and \textbf{EE}. 
To our knowledge, these problems have not been investigated for BAFs, except for the work of \citeauthor{Fazzinga.et.al:2018}. 
We restricted our study to the admissible, preferred and set-stable semantics of bipolar ABA used to capture various BAFs, 
but we will extend our analysis to other semantics in the future.

Complexity of ABA was investigated in \cite{Dimopoulos:Nebel:Toni:2002}. 
\citeauthor{Dimopoulos:Nebel:Toni:2002} studied general non-flat ABA with respect to the complexity of the derivation problem in the underlying deductive system of an ABA framework, 
as well as various instances of ABA, including the (flat) logic programming instance, called LP-ABA. 
Specifically, they established the generic upper bounds for \complexityProblem{VER}, 
as well as both upper bounds and instance-specific lower bounds for 
\complexityProblem{CA} and \complexityProblem{SA} under admissible, preferred and stable semantics. 
We note that \complexityProblem{DER} in LP-ABA belongs to $\P$ \cite{Dimopoulos:Nebel:Toni:2002}, and AFs can be mapped in \P-time into LP-ABA \cite{Toni:2012}. 
Thus, the results proven in this paper apply to LP-ABA as well. 
In particular, results provided in Section \ref{sec:complexityResults} complement the original work of \citeauthor{Dimopoulos:Nebel:Toni:2002} on LP-ABA by giving new lower bounds for  \complexityProblem{EX}, \complexityProblem{NE} and \complexityProblem{VER} problems.

The Tweety libraries \cite{Thimm:2017} provide implementations of various argumentation formalisms including AFs and ABA. Tweety can enumerate extensions of ABA frameworks under five semantics, including admissible, preferred and stable, 
but not set-stable semantics. 
Tweety essentially takes a brute force approach. 
For example, to compute the preferred extensions, it first generates all possible sets of assumptions and checks which ones are admissible. It then iterates through all these and checks which are maximal. This is very slow, as witnessed e.g.\ in a framework with ten assumptions, where Tweety takes more than 5 minutes to calculate extensions. In contrast, we showed our algorithms to be efficient in situations with hundreds of assumptions (on the same hardware). 

In \cite{DBLP:journals/argcom/EglyGW10}, \citeauthor{DBLP:journals/argcom/EglyGW10} provide an implementation of deductive BAFs \cite{Cayrol:Lagasquie-Schiex:2013}, but not of other approaches to BAFs. Their system reduces the problems to instances of answer set programming whereas ours works by directly calculating extensions.
There are also other implementations of structured argumentation formalisms (see \cite{Cerutti:Gaggl:Thimm:Wallner:2017} for a recent survey), 
and those relevant to ABA (e.g.\ 
\cite{Kakas:Moraitis:2003,Garcia:Simari:2014,Gordon:Walton:2016}) 
are reviewed in \cite{Bao:Cyras:Toni:2017}. 
Except for the Tweety libraries discussed above, to the best of our knowledge no other implementations of non-flat ABA in general, 
or bipolar ABA in particular, exist.


\section{Conclusions and Future Work}
\label{sec:conclusions}

In this paper, we established the computational complexity of six problems, 
namely (non-empty) existence, verification, (credulous and sceptical) acceptance and enumeration, 
for bipolar Assumption-Based Argumentation (ABA) under the admissible, preferred and set-stable semantics. 
Our results carry over to various Bipolar Argumentation Frameworks (BAFs) that are instances of bipolar ABA. 
We also provided novel algorithms for extension enumeration, consequently addressing the remaining problems, for bipolar ABA. 
Using these algorithms, we gave an implementation of bipolar ABA and various BAFs, 
and showed that it scales well. 
We have therefore provided solid theoretical foundations and realised an implementation underlying the practical deployment of bipolar argumentation. 

In the future, we plan on extending our analysis to generalisations of bipolar ABA. We will explore whether empowering these frameworks with new capabilities, such as support for factual rules or rules with multiple elements in their body, will lead to an increase in complexity. Moreover, we plan to extend our labelling algorithms to work for all ABA frameworks. Such algorithms will find use in an even wider range of practical scenarios than those described in this paper, due to the higher expressive power of generic ABA.

\subsubsection*{Acknowledgements}

The authors were supported by the EPSRC project 
\textbf{EP/P029558/1} ROAD2H: 
\emph{Resource Optimisation, Argumentation, Decision Support and Knowledge Transfer to Create Value via Learning Health Systems.}

\noindent
\textbf{Data access statement}: 
All data created during this research is available at \url{github.com/AminKaram/BipolarABASolver}. 
For more information please contact Amin Karamlou at \href{mak514@ic.ac.uk}{mak514@ic.ac.uk}.

\bibliographystyle{ACM-Reference-Format}  
\balance
\bibliography{references}

\end{document}